\newtheorem{theorem}{Theorem}
\newtheorem{corollary}{Corollary}
\begin{document}
\title{Generalization Analysis and Method for Domain Generalization for a Family of Recurrent Neural Networks}

\author{Atefeh Termehchi, Ekram Hossain,  \IEEEmembership{Fellow, IEEE}, and  Isaac Woungang,  \IEEEmembership{Senior Member, IEEE}
\thanks{Atefeh Termehchi and Ekram Hossain are with the Department of Electrical and Computer Engineering at the University of Manitoba, Winnipeg, Canada (emails: atefeh.termehchi@umanitoba.ca and ekram.hossain@umanitoba.ca). Isaac Woungang is with the Department of Computer Science, Toronto Metropolitan University, Toronto, Canada (email: iwoungan@torontomu.ca).}}

\maketitle

\begin{abstract}
Deep learning (DL) has driven broad advances across scientific and engineering domains. Despite its success, DL models often exhibit limited interpretability and generalization, which can undermine trust, especially in safety-critical deployments. As a result, there is growing interest in (i) analyzing interpretability and generalization and (ii) developing models that perform robustly under data distributions different from those seen during training (i.e. domain generalization). However, the theoretical analysis of DL remains incomplete. For example, many generalization analyses assume independent samples, which is violated in sequential data with temporal correlations. Motivated by these limitations, this paper proposes a method to analyze interpretability and out-of-domain (OOD) generalization for a family of recurrent neural networks (RNNs). Specifically, the evolution of a trained RNN’s states is modeled as an unknown, discrete-time, nonlinear closed-loop feedback system. Using Koopman operator theory, these nonlinear dynamics are approximated with a linear operator, enabling interpretability. Spectral analysis is then used to quantify the worst-case impact of domain shifts on the generalization error. Building on this analysis, a domain generalization method is proposed that reduces the OOD generalization error and improves the robustness to distribution shifts. Finally, the proposed analysis and domain generalization approach are validated on practical temporal pattern-learning tasks.
\end{abstract}
\begin{IEEEkeywords}
Recurrent neural
networks, interpretability, generalization, domain generalization, Koopman operator, dynamic mode decomposition with control, linear matrix inequality optimization.
\end{IEEEkeywords}
\section{Introduction}
Over the past decade, deep learning (DL) has driven impressive progress across scientific and engineering domains such as physics, robotics, and communications, to name a few. DL advances these areas by providing data-driven solutions when analytical modeling or optimization is difficult to formulate, relies on oversimplified assumptions, or becomes computationally intractable. However, the concerns about DL models have increasingly come to the forefront because they are frequently regarded as ``non-transparent" \cite{suh2025survey, csahin2025unlocking, xu2025interpretability}. In particular, they are viewed as non-transparent since, even with full access to the parameters (weights and biases) and activation functions, it remains difficult to determine what knowledge or decision rules the DL model has internalized. This is largely due to the repeated application of nonlinear transformations within the model. This opacity brings two connected challenges: limited interpretability and limited generalization beyond the training data. 

Interpretability refers to the extent to which a human can understand the reasons behind a model’s predictions, i.e. what information the model uses and how it maps the inputs to the outputs. Although the literature offers many perspectives, a widely cited definition describes interpretability as the ability to provide explanations in human-understandable terms \cite{ zhang2021survey}. On the other hand, generalization refers to a model’s ability to maintain a reliable performance on unseen data beyond the training set. It is often discussed in terms of in-distribution generalization, where the target data follow (approximately) the training distribution, and out-of-distribution (OOD) generalization, where the target data come from different distributions. Moreover, as a closely related concept, domain generalization (DG) aims to learn models whose performance remains robust under such distribution shifts. In contrast to other learning paradigms such as domain adaptation or transfer learning, DG assumes that the target-domain data is unavailable during training \cite{zhou2022domain}.

The limited interpretability and generalization of DL models undermine trust and impede their adoption in high-stakes settings such as healthcare, smart power systems, and security-critical infrastructure. In this context, model failures can lead to serious safety risks and substantial financial losses \cite{zhang2021survey}. Therefore, many research efforts focus on analyzing interpretability and generalization of DL models, as well as developing DG methods \cite{xu2025interpretability, hellstrom2025generalization, zhou2022domain}. However, the field is still not mature. For interpretability, some methods face practical limitations (e.g., rule- or logic-based explanations), and many approaches target only one facet of transparency, either by explaining input-output attributions (e.g., perturbation-based tests) or by analyzing internal mechanisms (e.g., information-bottleneck views).
 For generalization analysis, major gaps exist in settings with dependent data (e.g., sequential data), where temporal correlations and anytime-valid guarantees are required. Although some progress has been made, the area is in its early development, and many challenges remain to be addressed \cite{hellstrom2025generalization,rodriguez2024information}. 

Sequential data is ubiquitous and underpins decision-making in many sectors, including healthcare and energy systems. For modeling such sequences and supporting decisions in tasks with temporal patterns, recurrent neural networks (RNNs) have long been a core architecture. They have been widely used in applications such as power-load forecasting, speech recognition, and video/temporal vision tasks, since they provide a simple and efficient way to process time series. 
Consequently, this paper introduces an interpretable representation of RNNs (with a focus on Long Short-Term Memory [LSTM]) that captures both input--output attributions and internal mechanisms. Specifically, we model the dynamics of the LSTM internal states and their contributions to the output as an unknown, discrete-time, nonlinear closed-loop feedback system. To analyze these unknown dynamics, we leverage the Koopman operator theory and its data-driven identification capability. We approximate the Koopman spectral characteristics using the dynamic mode decomposition with control (DMDc). Domain shifts are modeled as additive disturbances on the inputs. We then use the $\text{H}_\infty$ norm to quantify the worst-case impact of such disturbances on the trained LSTM and to derive generalization-error characterizations under both covariate shift and concept drift. 
Building on these analyses, we propose a DG method that reduces the OOD generalization error of a trained LSTM under the input distribution shifts. Specifically, we formulate an optimization problem with linear matrix inequality (LMI) constraints that minimizes the effect of domain shift on the LSTM state dynamics, thereby improving the robustness to distribution changes.

\subsection{Motivation and Prior Work}
Interpretability analyses seek to make the decision-making behavior of DL models more transparent by moving beyond accuracy-focused evaluation toward understanding how and why decisions or predictions are produced. Although a substantial body of work has proposed interpretability analysis methods for DL models (see \cite{xu2025interpretability,zhang2021survey}, and the references therein), the field remains immature. This immaturity is reflected in several limitations. First, approaches that produce human-readable explanations (e.g., rule- or decision-tree-based methods such as  \cite{aguilar2022towards}) can become impractical when the explanation complexity is high; thus, complexity must be carefully controlled (e.g., by limiting decision-tree depth). Second, many methods provide only local explanations for a target input (e.g., based on feature values, saliency scores, or gradients), offering limited global insight into the model's overall behavior. For example, LIME \cite{ribeiro2016should} provides local interpretability. Although Global--LIME \cite{li2023g} improves the stability and consistency of LIME’s local explanations, it still mainly explains input--output attributions and does not capture internal mechanisms. Third, many methods target only one facet of transparency, focusing either on input--output relationships or internal mechanisms (e.g., mechanistic interpretability \cite{bereska2024mechanistic} or information-bottleneck-based views \cite{he2025information}).

In addition, generalization analysis studies how a model’s performance changes on unseen data. It is closely related to interpretability, since understanding how and why a model makes decisions helps assess when that model will remain reliable.
 Generalization analysis is commonly divided into in-distribution (unseen data follows roughly the training distribution) and out-of-distribution (unseen data comes from a different distribution). While in-distribution generalization analysis (under the independent and identically distributed [i.i.d.] assumption) still faces open challenges, generalization under distribution shift is even harder to analyze \cite{liu2021towards}. In particular, estimating the generalization error (GE) under domain shift is more challenging for several reasons. First, the target (unseen) data distribution is typically unknown and may deviate from the training distribution in arbitrary ways. Second, different shift types (such as covariate, label, and concept shift) can arise and often require distinct analytical treatments. Third, there is an inherent trade-off between adopting assumptions broad enough to capture realistic shifts and deriving the GE bounds that are tight enough to be practically informative. For instance, Weber et al.~\cite{weber2022certifying} analyze OOD generalization using a quantum-chemistry-inspired formulation and the Hellinger distance. However, their GE bound is valid only when the Hellinger distance between the training and shifted distributions remains below a threshold, limiting its applicability to small domain shifts.

 Moreover, generalization analysis in settings with dependent data (whether under the same distribution or under a distribution shift) remains challenging due to the temporal correlations. Although some progress has been made, the field is still in its early stages \cite{hellstrom2025generalization,rodriguez2024information}. One setting in which the sequential training data are used is deep reinforcement learning (DRL). In \cite{termehchi2025koopman}, the authors use a Koopman-based interpretability model for DRL to evaluate the generalization under covariate shift. Koopman operator theory and DMD have recently gained attraction as tools for characterizing the dynamical behavior of internal states and parameters in DL-based models \cite{dogra2020optimizing,rozwood2024koopman}. These data-driven dynamical-systems techniques provide a principled way to probe otherwise non-transparent learning dynamics. For example, \cite{rozwood2024koopman} shows that a Koopman-based representation can model the expected time evolution of a DRL value function, which can be leveraged to improve the value estimation and, in turn, enhance the DRL performance. The behavior of the RNN family can be modeled as a dynamical system because it updates an internal state over time as it processes a sequence. This recurrent state update captures the temporal dependencies via fixed update rules, analogous to a dynamical system. In this paper, we leverage the Koopman operator theory to analyze interpretability and generalization in RNNs, with a focus on LSTMs. The proposed method can also be extended to other RNN architectures.

\subsection{Contributions}
The main contributions of this paper are summarized below:
\begin{itemize}
    \item \textbf{Interpretability analysis}: Focusing on LSTMs, we model the state dynamics and the input--state--output relationship of a trained LSTM as an unknown, discrete-time, nonlinear closed-loop feedback system. We then apply the Koopman theory and DMDc to obtain a linear approximation of this dynamical system, enabling a global analysis of both input-output attributions and internal mechanisms of the LSTM. 
    \item \textbf{Generalization analysis}: We perform a spectral analysis on the proposed interpretable model to derive a bound on the GE. We thereby quantify the worst-case impact of domain shifts (i.e. covariate shift and concept shift) on the LSTM states and the resulting outputs.  
    \item \textbf{DG method}: We propose a DG method that mitigates the performance degradation induced by the domain shift. Specifically, we modify the trained LSTM architecture to mitigate the effect of domain shift (without retraining the model), thereby improving the robustness to OOD data.
\end{itemize}

\section{Background Concepts and Formal Definitions}
This section presents the core background theory and mathematical tools used in this work.
\subsection{Definition of Domain Shift in LSTM}
Let $\mathbf{x}_{0:T} = (x_0, \dots, x_T)$ denotes the input sequence of length $T$ and 
$\mathbf{y}_{0:T'} = (y_0, \dots, y_{T'})$ denotes the corresponding output sequence of length $T'$, where in general $T' \neq T$ as in sequence-to-sequence tasks~\cite{sutskever2014sequence}.
Assume that the LSTM is trained on pairs of sequences 
$(\mathbf{x}_{0:T}, \mathbf{y}_{0:T'}) \sim \mathcal{P}_i,$ and it is evaluated on test dataset $(\mathbf{x}_{0:T}, \mathbf{y}_{0:T'}) \sim \mathcal{P}_j$ (hereafter, the terms test and target are used interchangeably). A domain shift occurs when the joint distribution of the input--output sequences differs between training and test:
\[
\mathcal{P}_i(\mathbf{x}_{0:T}, \mathbf{y}_{0:T^{'}}) \;\neq\; 
\mathcal{P}_j(\mathbf{x}_{0:T}, \mathbf{y}_{0:T^{'}}).
\]
From a causal learning perspective and based on Bayes' rule
\begin{align*}
 \mathcal{P}_{i}(\mathbf{x}_{0:T}, \mathbf{y}_{0:T^{'}}) &= 
 \mathcal{P}^{y|x}_i(\mathbf{y}_{0:T^{'}} \mid \mathbf{x}_{0:T}) \times \mathcal{P}^{x}_i(\mathbf{x}_{0:T})\\
 &= \mathcal{P}^{x|y}_i(\mathbf{x}_{0:T} \mid \mathbf{y}_{0:T^{'}}) \times \mathcal{P}^{y}_i(\mathbf{y}_{0:T^{'}}),
\end{align*}
the domain shift is classified into two general classes:
\begin{itemize}
    \item \textbf{Covariate Shift (Input Shift)}: The situation in which the training input points and test input points follow different distributions, while the conditional distribution of the output given the input remains unchanged~\cite{sugiyama2007covariate}:
    \begin{align*}
    \mathcal{P}^{x}_i(\mathbf{x}_{0:T}) &\neq \mathcal{P}^{x}_j(\mathbf{x}_{0:T}), \\
    \mathcal{P}^{y|x}_i(\mathbf{y}_{0:T^{'}} \mid \mathbf{x}_{0:T}) &= \mathcal{P}^{y|x}_j(\mathbf{y}_{0:T^{'}} \mid \mathbf{x}_{0:T}).
    \end{align*}
    \item \textbf{Concept Drift}: The situation in which the conditional distribution of the output variable given the input features changes, even when the input distribution remains the same~\cite{gama2014survey}:
    \begin{align*}
    \mathcal{P}^{y|x}_i(\mathbf{y}_{0:T^{'}} \mid \mathbf{x}_{0:T}) &\neq \mathcal{P}^{y|x}_j(\mathbf{y}_{0:T^{'}} \mid \mathbf{x}_{0:T}).
    \end{align*}
\end{itemize}
In this paper, we focus on domain shifts that involve changes in the input distribution. If the conditional distribution of the output given the input remains unchanged across domains, the shift is \textbf{covariate shift}; and if the conditional distribution changes, it is \textbf{concept drift}. 
We let $\mathcal{D}_i$ denote the training domain associated with a joint distribution 
$\mathcal{P}_i(\mathbf{x}_{0:T}, \mathbf{y}_{0:T'})$. 
Similarly, $\mathcal{D}_j$ denotes the test domain with distribution 
$\mathcal{P}_j(\mathbf{x}_{0:T}, \mathbf{y}_{0:T'})$.

\subsection{Definition of Generalization Error}
The generalization error (GE) for regression tasks is defined as follows. 
Let $\mathcal{L}$ be a general loss function. 
Suppose that the training dataset $S_i = \{ (\mathbf{x}_{0:T}^{(n)}, \mathbf{y}_{0:T'}^{(n)}) \}_{n=1}^m$ is drawn from the domain distribution $\mathcal{D}_i$.
The \textit{empirical loss} of a regression model $M$ on $S_i$ is defined as: 
\(
\mathcal{L}_{\text{emp}}(M, S_i) = \frac{1}{m}\sum_{n=1}^m 
\mathcal{L}\!\left(\mathbf{y}_{0:T'}^{(n)}, M(\mathbf{x}_{0:T}^{(n)})\right),
\)
where \( M \) denotes the model trained on \( S_i \). The corresponding \textit{expected loss} of $M$ on a test domain $\mathcal{D}_j$ is:
\(
\mathcal{L}_{\text{exp}}(M, \mathcal{D}_j) = 
\mathbb{E}_{(\mathbf{x}_{0:T}, \mathbf{y}_{0:T'}) \sim \mathcal{D}_j}
\left[ \mathcal{L}(\mathbf{y}_{0:T'}, M(\mathbf{x}_{0:T})) \right],
\) 
which represents the model's average performance over a dataset $\{(\mathbf{x}_{0:T}, \mathbf{y}_{0:T'}) \sim \mathcal{D}_j \}$. The \textit{generalization error} of $M$ is the absolute difference between these two quantities~\cite{he2025information}:
\[
\text{GE}(M, S_i, \mathcal{D}_j) =
\left| \mathcal{L}_{\text{emp}}(M, S_i) - \mathcal{L}_{\text{exp}}(M, \mathcal{D}_j) \right|.
\]

\subsection{Koopman Operator}
The Koopman operator allows a nonlinear dynamical system, originally defined in a finite-dimensional space, to be represented as a linear system in an infinite-dimensional function space. This reformulation makes it possible to apply classical spectral analysis techniques to study complex behaviors that emerge from nonlinear dynamics \cite{proctor2018generalizing}. Consider a nonlinear discrete-time autonomous system described by the state-update equation:
\[
\mathbf{s}_{t} = f(\mathbf{s}_{t-1}),
\]
where $\mathbf{s}_t \in \mathbb{R}^{n_{\text{state}}}$ is the state vector, function $f: \mathbb{R}^{n_{\text{state}}} \to \mathbb{R}^{n_{\text{state}}}$ describes the system's evolution, and $t \in \mathbb{Z}$ denotes the time index. We now introduce a collection of functions $\phi: \mathbb{R}^{n_{\text{state}}} \to \mathbb{R}$, referred to as observables. This collection of observables forms an infinite-dimensional Hilbert space, denoted by~$\mathcal{O}$. 
The Koopman operator $\mathcal{K}$ is defined as a linear operator acting on these observables:
\[
\mathcal{K}\phi(\mathbf{s}_{t-1}) := \phi(f(\mathbf{s}_{t-1})).
\]
This definition implies that the Koopman operator advances the evaluation of observables according to the system dynamics. Although the underlying system may be nonlinear, the Koopman operator itself is linear. However, it generally operates in an infinite-dimensional functional space. 
The Koopman operator captures the complete behavior of the dynamical system, provided that the space of observables \( \mathcal{O} \) is sufficiently rich. In particular, if each coordinate 
function\footnote{A function that takes full vector $
\mathbf{s} =
\begin{bmatrix}
s^1 \\
s^2 \\
\vdots \\
s^{n_{\text{state}}}
\end{bmatrix}
$ and returns $i\text{th}$ component} \( \mathbf{s} \mapsto s^i \), for \( i = 1, \ldots, n_{\text{state}} \), belongs to \( \mathcal{O} \), then the observable space is considered rich. Under this condition, the Koopman framework is capable of fully representing the system's evolution. 

\textbf{Extended Koopman Operator for Non-autonomous Systems}: Koopman operator theory was originally developed for autonomous dynamical systems (i.e. systems without exogenous inputs or forcing). However, several approaches have been proposed to extend the Koopman framework to non-autonomous systems \cite{williams2016extending,proctor2018generalizing,korda2018linear}. In this work, we adopt the method presented in \cite{korda2018linear}, summarized as follows. Consider a discrete-time non-autonomous system described by the state-update equation:
\begin{equation}
\mathbf{s}_{t} = f(\mathbf{s}_{t-1},\mathbf{x}_{t}),
\label{state-update}
\end{equation}
where $\mathbf{x}_t \in \mathbb{R}^{n_{\text{in}}}$ is input vector at time $t \in \mathbb{Z}$. In \cite{korda2018linear}, the Koopman operator is defined for the non-autonomous system by extending the state space.
The extended state space is defined as \( \mathbb{R}^{n_{\text{in}}} \times \mathcal{X}^\infty \), where \( \mathcal{X}^\infty \) is the space of infinite input sequences \(\boldsymbol{x}_t = (\mathbf{x}_t, \mathbf{x}_{t+1}, \dots) \). 
Accordingly, the extended state is:
\begin{equation}
\zeta_{t} = 
\begin{bmatrix}
\mathbf{s}_{t-1} \\
\boldsymbol{x}_t
\end{bmatrix},
\label{extend_state}
\end{equation}
and the dynamics of the extended state is described as:
\begin{equation}
 \zeta_{t+1} = {F}(\zeta_{t}) := 
\begin{bmatrix}
f(\mathbf{s}_{t-1},\boldsymbol{x}_t(0)) \\
\ell(\boldsymbol{x}_t)
\end{bmatrix},    
\label{extend_dynamics}
\end{equation}
where $\boldsymbol{x}_t(0)$ denotes first element of input sequence in $\boldsymbol{x}_t$ and $\ell(\boldsymbol{x}_t)= \boldsymbol{x}_{t+1}$. The Koopman operator \( \mathcal{K} : \mathcal{O} \to \mathcal{O} \), associated with the system in (\ref{extend_dynamics}) is defined as:
\begin{equation}
\mathcal{K} \varphi(\zeta_{t}) = \varphi(F(\zeta_t)),
\label{extend_koopman}
\end{equation}
where \( \zeta_t \in \mathbb{R}^n \times \mathcal{X}^\infty \) is the extended state and \( \varphi : \mathbb{R}^{n_{\text{in}}} \times \mathcal{X}^\infty \to \mathbb{R} \) is an element of the observable space \( \mathcal{O} \), which is an infinite-dimensional Hilbert space.

\subsection{Dynamic Mode Decomposition with Control}
To find a finite dimensional approximation to the operator $\mathcal{K}$ in (\ref{extend_koopman}), we adopt the dynamic mode decomposition with control (DMDc) proposed in \cite{proctor2016dynamic}. DMDc is a data-driven algorithm assuming access to a dataset \( \{ (\zeta_t, \zeta_{t+1}) \}_{t=1}^{\bar{T}} \), where each pair satisfies \( \zeta_{t+1} = F(\zeta_t) \). It then seeks a matrix \( \hat{\mathbf{A}} \), representing the finite-dimensional approximation of the Koopman operator \( \mathcal{K} \), by minimizing the following least-squares objective:
\begin{equation}
\sum_{t=1}^{\bar{T}} \left\| \boldsymbol{\varphi}(\zeta_{t+1}) - \hat{\mathbf{A}} \boldsymbol{\varphi}(\zeta_{t}) \right\|_2^2,
\label{obj_edmd}
\end{equation}
where \( \boldsymbol{\varphi}(\zeta_t) = [\varphi_1(\zeta), \dots, \varphi_{N_\varphi}(\zeta)]^\top \) is a vector of observables, with each \( \varphi_i : \mathbb{R}^n \times \ell(\boldsymbol{x}_t) \to \mathbb{R} \) for \( i = 1, \dots, N_\varphi \).
The dimensional of the extended state \( \zeta_t \) is typically infinite due to the input sequence \(\ell(\boldsymbol{x}_t) \). Therefore, the objective can only be evaluated in practice if the observables \( \varphi_i \) are selected such that the problem becomes finite-dimensional. To address this, and without loss of generality (due to linearity and causality), the observable vector \( \boldsymbol{\varphi} \) is defined as in~\cite{korda2018linear}:
\begin{equation}
    \boldsymbol{\varphi}(\zeta) = \begin{bmatrix}
        \psi(\mathbf{s}_{t-1}) \\
        \boldsymbol{x}_t(0)
    \end{bmatrix},
\end{equation}
where \( \psi = [\psi_1, \dots, \psi_N] \), and each \( \psi_i: \mathbb{R}^{n_{\text{state}}} \to \mathbb{R}\) is generally nonlinear. The functions \( \psi_i \) are chosen so that the space of observables is rich enough to capture the system’s dynamics. Since we are not interested in the dynamics that governs the future values of the input sequence, we can ignore the last \( m \) components in each term \( \boldsymbol{\varphi}(\zeta_{t+1}) - \hat{\mathbf{A}} \boldsymbol{\varphi}(\zeta_{t}) \) in equation (\ref{obj_edmd}). Let \( \tilde{\mathbf{A}} \) represent the first \( N \) rows of matrix \( \mathbf{A} \), and let decompose it as: 
\(
\tilde{\mathbf{A}} = [\mathbf{A} \;\; \mathbf{B}],
\)
where \( \mathbf{A} \in \mathbb{R}^{N \times N} \) and \( \mathbf{B} \in \mathbb{R}^{N \times n_{\text{in}}} \). Then, the optimization objective of (\ref{obj_edmd}) becomes:
\begin{equation}
\min_{\mathbf{A}, \mathbf{B}} \sum_{t=1}^{\bar{T}} \left\| \boldsymbol{\psi}(\mathbf{s}_{t}) - \mathbf{A} \boldsymbol{\psi}(\mathbf{s}_{t-1}) - \mathbf{B} \boldsymbol{x}_t(0) \right\|_2^2.
\label{obj_EDMD_}
\end{equation}
By solving the optimization problem in (\ref{obj_EDMD_}) and obtaining the matrices \( \mathbf{A} \) and \( \mathbf{B} \), we can approximate the nonlinear state-update equation (\ref{state-update}) with the following linear form:
\begin{equation}
    \boldsymbol{\psi}(\mathbf{s}_{t}) = \mathbf{A}\boldsymbol{\psi}(\mathbf{s}_{t-1}) + \mathbf{B} \mathbf{x}_{t}.
\end{equation}
Several studies have proposed suitable choices for the functions \( \psi_i \), such as the sparse identification of nonlinear dynamical systems method \cite{brunton2016sparse}. A simple and common choice is $
\psi_i(\mathbf{s}) = s^i \quad \text{for all } i \in \{1, 2, \ldots, n_{\text{state}} \},
$ which satisfies the requirement that the space of observables is sufficiently rich.

The numerical procedure of calculating $\mathbf{A}$ and $\mathbf{B}$ is as follows. We assume access to a dataset:
\[
\mathbf{S} = [\mathbf{s}_{0}, \dots, \mathbf{s}_{\bar{T}-1}],  \mathbf{\Xi} = [\mathbf{s}_{1}, \dots, \mathbf{s}_{\bar{T}}],  \mathbf{X} = [\mathbf{x}_{1}, \dots, \mathbf{x}_{\bar{T}}],
\]
where each data point satisfies the relation \(\mathbf{s}_{t} = f(\mathbf{s}_{t-1},\mathbf{x}_{t})\). Importantly, the data samples are not required to come from a single trajectory of the system described by equation (\ref{state-update})\footnote{This is relevant here: we run the LSTM on many independent sequences, producing samples from multiple trajectories.}. Given this data, we seek for matrices \( \mathbf{A} \in \mathbb{R}^{N \times N} \) and \( \mathbf{B} \in \mathbb{R}^{N \times n_{\text{in}}} \) that best predict the next state in the observable space using a linear model. These matrices are computed by solving the following least-squares optimization problem:
\begin{equation}
\min_{\mathbf{A}, \mathbf{B}} \left\| \boldsymbol{\psi}(\mathbf{\Xi}) - \mathbf{A} \boldsymbol{\psi}(\mathbf{S}) - \mathbf{B}\mathbf{X} \right\|_2,
\label{opt_}
\end{equation}
where $\boldsymbol{\psi}(\mathbf{S}) = [\boldsymbol{\psi}(\mathbf{s}_{0}), \dots, \boldsymbol{\psi}(\mathbf{s}_{\bar{T}-1})]$, $\boldsymbol{\psi}(\mathbf{\Xi}) = [\boldsymbol{\psi}(\mathbf{s}_{1}), \dots, \boldsymbol{\psi}(\mathbf{s}_{\bar{T}})]$, and
\( \boldsymbol{\psi} : \mathbb{R}^{n_{\text{state}}} \to \mathbb{R}^N \) is defined as: 
\(
\boldsymbol{\psi}(\mathbf{s}) =
\begin{bmatrix}
\psi_1(\mathbf{s}) \\
\vdots \\
\psi_N(\mathbf{s})
\end{bmatrix}.\)
As noted earlier, a simple and common choice is 
$\psi_i(\mathbf{s}) = s^i \quad \text{for all } i \in \{1, 2, \ldots, n_{\text{state}} \}$.

The closed-form solution to the optimization problem in (\ref{opt_}) is given by:
\(
[\mathbf{A} \;\; \mathbf{B}] = \boldsymbol{\psi}(\mathbf{\Xi}) [\boldsymbol{\psi}(\mathbf{S}) \;\; \mathbf{X}]^{\dagger},
\)
where \( \dagger \) denotes the Moore–Penrose pseudoinverse. We now specialize the numerical procedure of $\mathbf{A}$ and $\mathbf{B}$ when the identity observable (unity lift), i.e.
\(
\boldsymbol{\psi}(\mathbf{s}) = \mathbf{s} \in \mathbb{R}^{n_{\text{state}}}.
\)
Hence the linear surrogate acts directly on the original state \cite{proctor2016dynamic}:
\[
\mathbf{s}_{t} \approx \mathbf{A}\,\mathbf{s}_{t-1} + \mathbf{B}\,\mathbf{x}_{t}.
\]
The step-by-step numerical procedure of $\mathbf{A}$ and $\mathbf{B}$ when the identity observable (unity lift) is presented in Appendix I.
\subsection{$\ell_2$ Norm, $Z$-transform, $\text{H}_\infty$ Norm}
The $\ell_2$ norm of a discrete-time signal $\mathbf{x}_k$ ($k\ge 0$) is given by \(
\|\mathbf{x}\|_{\ell_2}
\triangleq
\left(\sum_{k=0}^{\infty}\|\mathbf{x}_k\|_2^2\right)^{1/2},\)
where $\|\mathbf{x}_k\|_2$ denotes the Euclidean norm at time $k$.

The $Z$-transform is a standard tool for analyzing discrete-time systems by mapping the time-domain difference equations into algebraic relations in the $z$-domain. For a causal discrete-time signal $\mathbf{x}_k$ ($k\ge 0$), its $Z$-transform is  
\(Z\{\mathbf{x}_k\}=\mathbf{x}_z=\sum_{k=0}^{\infty}\mathbf{x}_k z^{-k}.\)

The $\text{H}_\infty$ norm is a classical robust-control metric that quantifies the worst-case gain of a transfer function over frequency. For a transfer function $\mathbf{T}_z$, it is defined as  
\(
\|\mathbf{T}_z\|_{\text{H}_\infty}=\sup_{\omega\in[0,\pi]}\sigma_{\max}\!\left(\mathbf{T}_z\!\left(e^{j\omega}\right)\right),
\)
where $\sigma_{\max}(\cdot)$ denotes the maximum singular value. For a stable transfer function $\mathbf{T}_z$ mapping $\mathbf{w}_z$ to $\mathbf{y}_z$, the $\mathcal{H}_\infty$ norm equals the induced $\ell_2$ gain, i.e. \({\|\mathbf{y}\|_{\ell_2}} \le \|\mathbf{T}_z\|_{\mathcal{H}_\infty}{\|\mathbf{w}\|_{\ell_2}}\). Stability is equivalent to all poles of $\mathbf{T}_z$ lying strictly inside the unit circle.
\subsection{Linear Inequality Matrix (LMI) Optimization} 
An LMI is a convex constraint of the form
\(\mathbf{F}(\mathbf{x})=\mathbf{F}_0+\sum_{i} x_i \mathbf{F}_i \succ 0,\)
where $\mathbf{F}_i$ are symmetric matrices and $\mathbf{x}$ is the decision vector. Many stability and robust-performance requirements in control can be expressed as LMIs, enabling an efficient optimization. For a discrete-time linear time-invariant system with disturbance input $\mathbf{w}_t$ and performance output $\boldsymbol{\kappa}_t$, the Bounded Real Lemma provides an equivalent convex condition (expressed as an LMI) ensuring (i) internal stability (i.e. the system's states remain bounded) and (ii) an $\text{H}_\infty$ performance bound $\|\mathbf{T}_{\mathbf{w}\to\boldsymbol{\kappa}}\|_{\text{H}_\infty}<\gamma$, i.e. the induced gain from $\mathbf{w}_t$ to $\boldsymbol{\kappa}_t$ is at most $\gamma$ over all frequencies. This converts the worst-case robustness specifications into tractable LMI constraints \cite{scherer2000linear}.

\section{Interpretable Model for LSTM Networks and Modeling Domain Shifts }
In this section, we model the dynamical behavior of an LSTM network as a closed-loop feedback system, based on the rules governing the LSTM cell's internal dynamics. Specifically, the evolution of  hidden and cell states in a trained LSTM network is represented as a discrete-time nonlinear dynamical system. We then model the causal classes of domain shift (i.e. covariate shift and concept drift) as random additive disturbances to the inputs\footnote{We consider concept shift, in which, given the input features, both the input distribution and the conditional output distribution change.}.
Next, we use Koopman operator and DMDc to construct an interpretable model of the LSTM’s behavior.

\subsection{Modeling Dynamical Behavior of an LSTM Network}
A dynamical system is a system in which the states evolve over time according to a set of rules. This evolution depends on the system’s previous state. For non-autonomous systems, it also depends on the external inputs. The set of rules is often expressed as differential or difference equations. Like a dynamical system, the internal states of an LSTM network evolve over time in response to a sequence of inputs and their previous internal states.
As illustrated in Fig.~\ref{LSTM_model}a, at each time step $t \in \{0,1,...,T\}$, one LSTM cell takes input vector $\mathbf{x}_t \in \mathbb{R}^{n_{\text{in}}}$, previous hidden state $\mathbf{h}_{t-1} \in \mathbb{R}^{n_{\text{hid}}}$, previous cell state $\mathbf{c}_{t-1} \in \mathbb{R}^{n_{\text{hid}}}$, and computes new hidden state $\mathbf{h}_t \in \mathbb{R}^{n_{\text{hid}}}$, and new cell state $\mathbf{c}_t \in \mathbb{R}^{n_{\text{hid}}}$. Specifically, the evolution of time-dependent  network states ($\mathbf{c}_{t}$ and $\mathbf{h}_{t}$) are influenced by both the current input ($\mathbf{x}_{t}$) and the previous states ($\mathbf{c}_{t-1}$ and $\mathbf{h}_{t-1}$), making the network inherently dynamic. The set of rules that govern the dynamics of the LSTM cell is as follows:
\begin{figure}[!t]
  \centering
  \begin{minipage}[t]{0.48\textwidth}
    \centering
    \includegraphics[width=\linewidth]{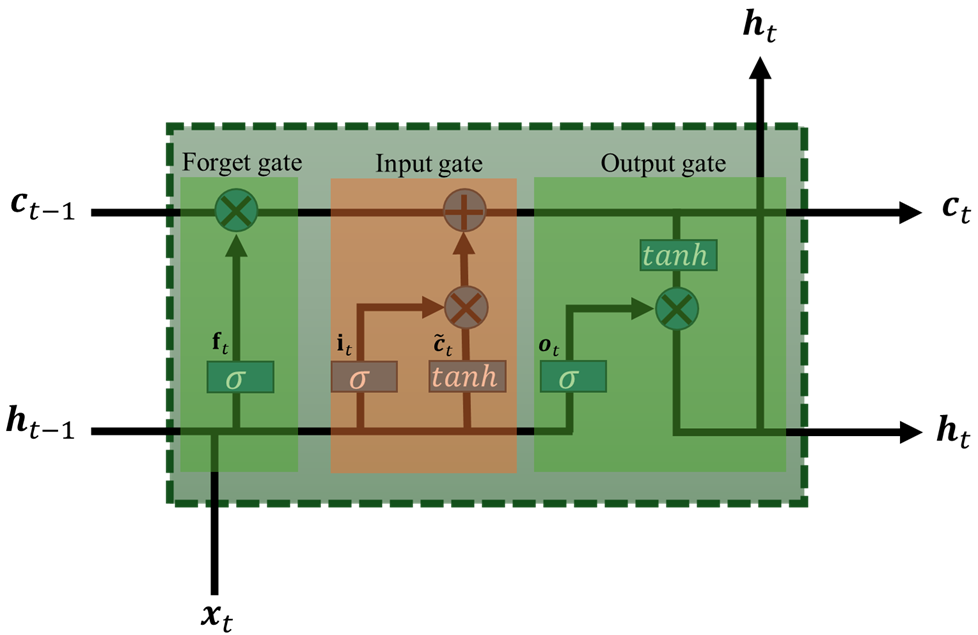}
    \footnotesize{(a) Architecture of an LSTM cell} 
    \label{LSTM_arc}
  \end{minipage}
  \hfill
  \begin{minipage}[t]{0.48\textwidth}
    \centering
    \includegraphics[width=\linewidth]{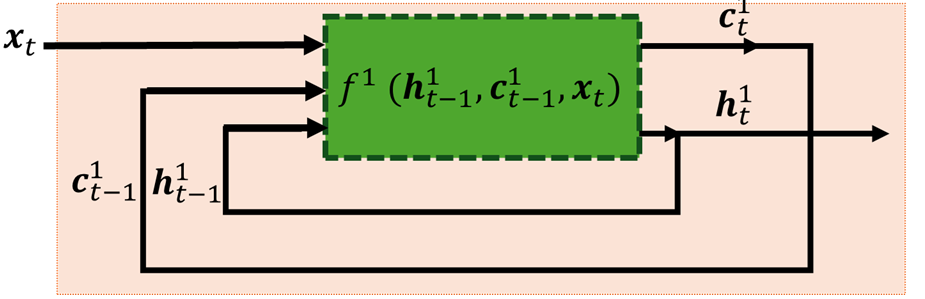}
   \footnotesize{(b) Dynamical behavior model of one layer LSTM}
    \label{LSTM_model_}
  \end{minipage}
  \caption{LSTM model}
  \label{LSTM_model}
\end{figure}
\begin{itemize}
    \item {\em Forget gate}: 
\begin{equation}
\mathbf{f}_t = \sigma(\mathbf{W}_f \mathbf{x}_{t} + \mathbf{U}_f \mathbf{h}_{t-1} + \mathbf{b}_f),
\label{forget}
\end{equation}
\noindent where $\mathbf{f}_t \in \mathbb{R}^{n_{\text{hid}}}$, $\mathbf{W}_f \in \mathbb{R}^{n_{\text{hid}} \times n_{\text{in}}}$, $\mathbf{U}_f \in \mathbb{R}^{n_{\text{hid}} \times n_{\text{hid}}}$, $\mathbf{b}_f \in \mathbb{R}^{n_{\text{hid}}}$, and $\sigma$ denotes the sigmoid function.
\item {\em Input gate}:
\begin{equation}
\mathbf{i}_t = \sigma(\mathbf{W}_i \mathbf{x}_{t} + \mathbf{U}_i \mathbf{h}_{t-1} + \mathbf{b}_i),
\label{input}
\end{equation}
\noindent where $\mathbf{i}_t \in \mathbb{R}^{n_{\text{hid}}}$, $\mathbf{W}_i \in \mathbb{R}^{n_{\text{hid}} \times n_{\text{in}}}$, $\mathbf{U}_i \in \mathbb{R}^{n_{\text{hid}} \times n_{\text{hid}}}$, and $\mathbf{b}_i \in \mathbb{R}^{n_{\text{hid}}}$.
\item {\em Candidate cell state}:
\begin{equation}
\tilde{\mathbf{c}}_t = \tanh(\mathbf{W}_c \mathbf{x}_{t} + \mathbf{U}_c \mathbf{h}_{t-1} + \mathbf{b}_c),
\label{candidate}
\end{equation}
\noindent where $\tilde{\mathbf{c}}_t \in \mathbb{R}^{n_{\text{hid}}}$, $\mathbf{W}_c \in \mathbb{R}^{n_{\text{hid}} \times n_{\text{in}}}$, $\mathbf{U}_c \in \mathbb{R}^{n_{\text{hid}} \times n_{\text{hid}}}$, $\mathbf{b}_c \in \mathbb{R}^{n_{\text{hid}}}$, and $\tanh$ denotes the $\tanh$ activation function. 
\item {\em Cell state update}:
\begin{equation}
\mathbf{c}_t = \mathbf{f}_t \odot \mathbf{c}_{t-1} + \mathbf{i}_t \odot \tilde{c}_t,
\label{update}
\end{equation}
\noindent where $\odot$ denotes the Hadamard product.
\item {\em Output gate}:
\begin{equation}
\mathbf{o}_t = \sigma(\mathbf{W}_o \mathbf{x}_{t} + \mathbf{U}_o \mathbf{h}_{t-1} + \mathbf{b}_o),
\label{output}
\end{equation}
\noindent where $\mathbf{o}_t \in \mathbb{R}^{n_{\text{hid}}}$, $\mathbf{W}_o \in \mathbb{R}^{n_{\text{hid}} \times n_{\text{in}}}$, $\mathbf{U}_o \in \mathbb{R}^{n_{\text{hid}} \times n_{\text{hid}}}$, and $\mathbf{b}_o \in \mathbb{R}^{n_{\text{hid}}}$.
\item {\em Hidden state}:
\begin{equation}
\mathbf{h}_t = \mathbf{o}_t \odot \tanh(\mathbf{c}_t).
\label{hidden}
\end{equation}
\end{itemize}
\begin{figure*}[!ht] 
    \centering
    \includegraphics[width=\textwidth]
    {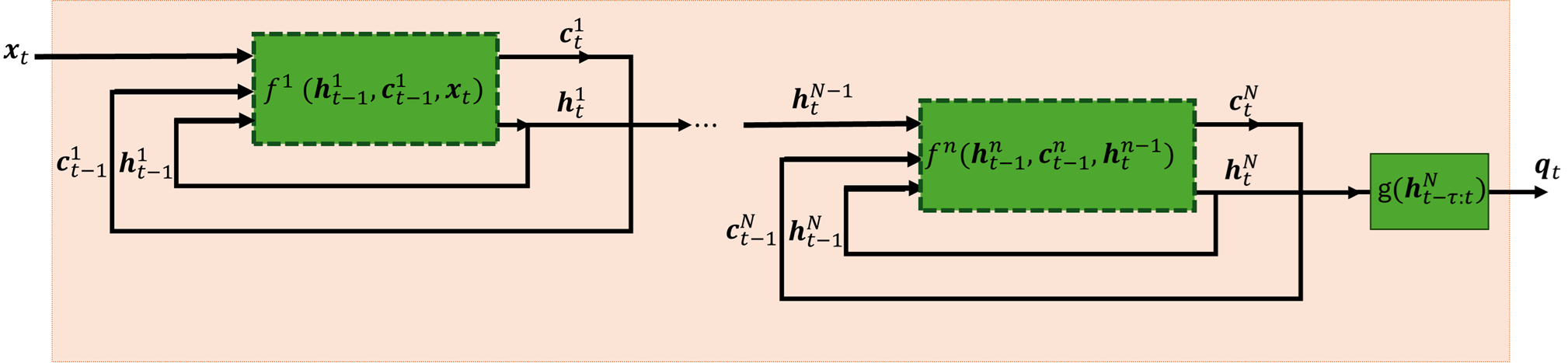}
    \caption{Deep LSTM model}
    \label{DeepLSTM_model}
\end{figure*}

These operations result in the evolution of the time-dependent network states (cell and hidden states),
where the next states are influenced by both the current
input and the previous states. 
Accordingly, we model the dynamical behavior of a one-layer LSTM network as a closed-loop feedback system (shown in Fig.~\ref {LSTM_model}b): 
\begin{align}
 \begin{bmatrix}
    \mathbf{c}^1_t \\
    \mathbf{h}^1_t 
\end{bmatrix} &= f^1\left(\begin{bmatrix}
    \mathbf{c}^1_{t-1} \\
    \mathbf{h}^1_{t-1} 
\end{bmatrix}, \mathbf{x}_{t}\right),  \label{sys_s_1}
\end{align}
where $f^1$ is a nonlinear function (stochastic or deterministic). A deep LSTM model (Fig.~\ref{DeepLSTM_model}) is built by stacking several LSTM layers, where the outputs of each layer become the inputs to the layer above \cite{graves2013speech}. Consequently, the  rules of the evolution of hidden and cell states in a trained deep LSTM network with $N$ layers can be modeled as a discrete dynamical nonlinear system:
\begin{align}
 \begin{bmatrix}
    \mathbf{c}^1_t \\
    \mathbf{h}^1_t 
\end{bmatrix} &= f^1\left(\begin{bmatrix}
    \mathbf{c}^1_{t-1} \\
    \mathbf{h}^1_{t-1} 
\end{bmatrix}, \mathbf{x}_{t}\right), \nonumber \\
\begin{bmatrix}
    \mathbf{c}^2_t \\
    \mathbf{h}^2_t 
\end{bmatrix} &= f^2\left(\begin{bmatrix}
    \mathbf{c}^2_{t-1} \\
    \mathbf{h}^2_{t-1} 
\end{bmatrix}, \mathbf{h}^1_{t}\right), \nonumber \\
\;\;\;\;\;\;\;\;\;\;\;\; \vdots \nonumber \\
\begin{bmatrix}
    \mathbf{c}^N_t \\
    \mathbf{h}^N_t 
\end{bmatrix} &= f^N\left(\begin{bmatrix}
    \mathbf{c}^N_{t-1} \\
    \mathbf{h}^N_{t-1} 
\end{bmatrix}, \mathbf{h}^{N-1}_{t}\right), \nonumber \\
\mathbf{q}_t &= g(\mathbf{h}^{N}_{t-\tau:t}),
    \label{sys_s_2}
\end{align}
where $n \in \{1,\dots,N\}$ denotes the number of LSTM layers, $f^n$s are nonlinear functions (stochastic or deterministic), $g$ is a known or unknown function that maps the hidden-state window
$\mathbf{h}^N_{t-\tau:t} := \begin{bmatrix}{\mathbf{h}^N_{t-\tau}}^\top, & \cdots &, {\mathbf{h}^N_t}^\top\end{bmatrix}^\top
$ to the output $\mathbf{q}_t \in \mathbb{R}^{n_{\text{out}}}$ (stochastic or deterministic), and $\tau \in \mathbb{N}_0$ is the history length ((i.e. window size). When $\tau = 0$, the output map reduces to $\mathbf{q}_t = g(\mathbf{h}^N_t)$. 
In addition,  we consider that the dataset \( S_i \)  used to train the LSTM network consists of sequences of the form
$(\mathbf{x}_{0:T}, \mathbf{y}_{0:T'})$. 
Each sequence \( (\mathbf{x}_{0:T}, \mathbf{y}_{0:T'}) \sim D_i \) is assumed to be drawn from an underlying data-generating distribution \( \mathcal{P}_i \), which defines the joint distribution over the entire sequence. The LSTM network is trained so that the distribution of the network outputs approximates the conditional distribution \( \mathcal{P}^{y|x}_i(\mathbf{y}_{0:T^{'}} \mid \mathbf{x}_{0:T})\). 
Additionally, the distribution over input $(\mathbf{x}_{0}, ...,  \mathbf{x}_{T})$ follows $\mathcal{P}^x_i$.

\vspace{0.2cm}
\noindent
\textbf{Assumption 1}. We assume that $\mathcal{P}_i$, $\mathcal{P}^{y|x}_i$, and $\mathcal{P}^x_i$ are unknown distributions. 

\vspace{0.2cm}
\noindent
\textbf{Assumption 2}. If the LSTM network is a Bayesian LSTM, then the function \( f \) is stochastic; otherwise, it is deterministic. In this work, we focus on the conventional (non-Bayesian) form of the LSTM network.

\vspace{0.2cm}
\noindent
Note that the function $f^n$ is treated as a known nonlinear mapping if the LSTM network's weights, biases, and activation functions (as defined in equations (\ref{forget})–(\ref{hidden})) are available. This also requires that their scale is manageable enough for the behavior of an individual cell to be estimated or approximated. Otherwise, $f^n$ is considered unknown.
\begin{figure}[!ht] 
    \centering
    \includegraphics[width=3.5 in]{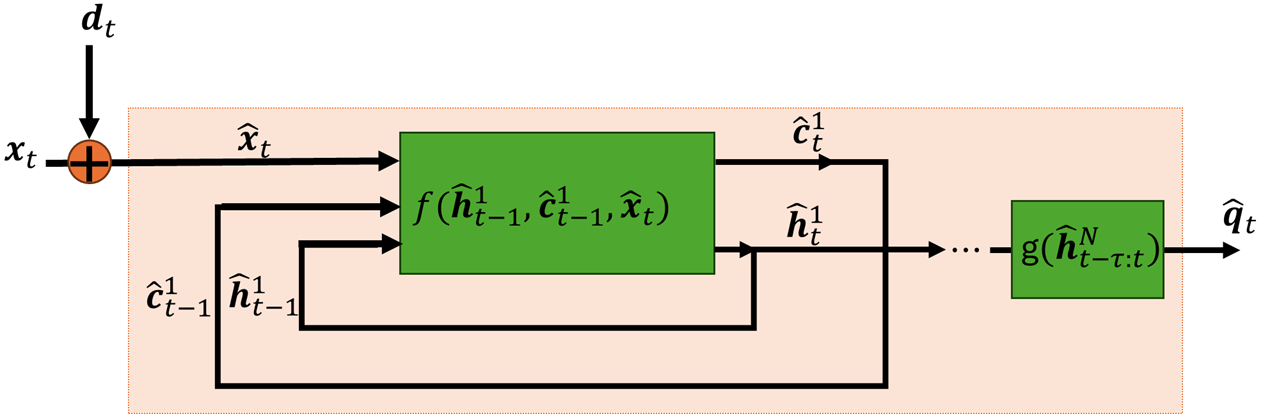}
    \caption{Impact of domain shift on dynamical behavior of an LSTM model}
    \label{LSTM_model_d}
\end{figure}
\subsection{Modeling Domain Shift }

We use the additive disturbance to model the considered domain shift. Specifically, we model domain shifts over the probability distribution $\mathcal{P}^x_i$ by an additive disturbance vector $\mathbf{d}_t$, as shown in Fig. \ref{LSTM_model_d}. The distribution over $(\mathbf{d}_0, ...,  \mathbf{d}_T)$ follows $\mathcal{P}_d$. 
Accordingly, the discrete dynamical
nonlinear model associated with the evolution of internal states in (\ref{sys_s_2}) is modified as: 
\begin{align}
    \begin{bmatrix}
    \mathbf{\hat{c}}^1_t \\
    \mathbf{\hat{h}}^1_t 
\end{bmatrix} &= f^1\left( \begin{bmatrix}
    \mathbf{\hat{c}}^1_{t-1} \\
    \mathbf{\hat{h}}^1_{t-1} 
\end{bmatrix}, \mathbf{x}_{t}+\mathbf{d}_{t}\right),\nonumber\\ 
\vdots \nonumber \\
    \begin{bmatrix}
    \mathbf{\hat{c}}^N_t \\
    \mathbf{\hat{h}}^N_t 
\end{bmatrix} &= f^N\left( \begin{bmatrix}
    \mathbf{\hat{c}}^N_{t-1} \\
    \mathbf{\hat{h}}^N_{t-1} 
\end{bmatrix}, \mathbf{\hat{h}}^{N-1}_{t}\right),\nonumber\\ 
\mathbf{\hat{q}}_t &= g(\mathbf{\hat{h}}^N_{t-\tau:t})
    \label{sys_s_d_2}
\end{align} 
where $\mathbf{d}_t \in \mathbb{R}^{n_{\text{in}}}$. Specifically, we have $\mathbf{\hat{x}}_{t} = \mathbf{x}_{t}+\mathbf{d}_{t}$, where $(\mathbf{\hat{x}}_0, ...,  \mathbf{\hat{x}}_T)\sim \mathcal{\hat{P}}^x_i$, $\mathcal{\hat{P}}^x_i = \mathcal{P}^x_i \circledast \mathcal{P}_d$, and $\circledast $ denotes convolution operator.

\vspace{0.2cm}
\noindent
\textbf{Assumption 3}. We assume that the additive disturbance vector $\mathbf{d}_t$ is independent of $\mathbf{x}_t$ at all $t$. In addition, we assume $\mathcal{P}_d$ is an unknown distribution. 

\subsection{Using Koopman Operator and DMDc to Approximate an LSTM Network Model}

In Subsection III.A, we have modeled the evolution of internal states of a deep LSTM network, $\mathbf{c}^n_t$ and $\mathbf{h}^n_t$, as the discrete dynamical nonlinear system (\ref{sys_s_2}). Even with full access to the exact weights, biases, and activation functions of the trained LSTM cells (i.e. known functions $f^n$), 
it remains difficult to interpret the patterns of data recognized by the network. This lack of interpretability primarily stems from the repeated application of nonlinear transformations at each time step $t$ 
and potentially from the high dimensionality of the internal states. Consequently, it is challenging to explain how specific inputs lead to particular outputs or to trace the internal computations of the network. To address this issue, we apply data-driven identification methods to approximate the nonlinear function $f^n$ with a linear representation. Specifically, we first use Koopman operators that act on the space of observable functions of the network's states in (\ref{sys_s_2}), allowing the nonlinear dynamics to be analyzed through a linear perspective. Then, we employ DMDc to approximate the Koopman operators.

As discussed in Section~II.C, the Koopman operator, although originally formulated for autonomous systems, has been extended for application to non-autonomous systems \cite{williams2016extending,proctor2018generalizing,korda2018linear}. In this work, we adopt the approach proposed in \cite{korda2018linear}. Accordingly, we define the extended state as:
\begin{equation}
\zeta^n_{t} = 
\begin{bmatrix}
\mathbf{c}^n_{t-1} \\
\mathbf{h}^n_{t-1}\\
\boldsymbol{u}^n_t
\end{bmatrix}, n \in \{1,\dots,N\},
\label{extend_state_lstm}
\end{equation}
where $\mathbf{c}_t^n, \mathbf{h}_t^n \in \mathbb{R}^{n_{\text{hid}}^n}$, 
$\boldsymbol{u}_{t}^n := (\mathbf{u}_t^n,\mathbf{u}_{t+1}^n,\ldots) \in \mathcal{U}_n^{\infty}$, 
and the per-step input to layer $n$ is $\mathbf{u}_t^n \in \mathcal{U}_n \subseteq \mathbb{R}^{n_{\text{in}}^n}$. 
We define
\[
\boldsymbol{u}_t^n =
\begin{cases}
\boldsymbol{x}_t, & n = 1,\\
\boldsymbol{h}_t^{\,n-1}, & n \ge 2~,
\end{cases}
\]
where $\boldsymbol{x}_t = (\mathbf{x}_t, \mathbf{x}_{t+1}, \dots) $ and $\boldsymbol{h}^{n-1}_t = (\mathbf{h}^{n-1}_t, \mathbf{h}^{n-1}_{t+1}, \dots)$\footnote{The extended state $\zeta^n_{t}$ is an ordered pair (a tuple) that the first component is $\begin{bmatrix}
\mathbf{c}^n_{t-1} \\
\mathbf{h}^n_{t-1}
\end{bmatrix}$ (finite-dim, $\mathbb{R}^{2n_{\text{hid}}^n}$), the second component is $\boldsymbol{u}^n_t$ (an entire sequence $(\mathbf{u}_t^n,\mathbf{u}_{t+1}^n,\ldots)$ living in the infinite-dimensional space:
\begin{equation*}
  \zeta_t^n \;\in\; \underbrace{\mathbb{R}^{2n_{\text{hid}}^n}}_{\text{finite}} \times
          \underbrace{\mathcal U_n^{\infty}}_{\text{sequence space}}.
\end{equation*}}.
Thus, the dynamics of the extended state is described as:
\begin{equation}
 \zeta^n_{t+1} = {F}^n(\zeta^n_{t}) := 
\begin{bmatrix}
f^n\left(\begin{bmatrix}\mathbf{c}^n_{t-1} \\
\mathbf{h}^n_{t-1}\end{bmatrix},\boldsymbol{u}^n_t(0)\right) \\
\ell(\boldsymbol{u}^n_t)
\end{bmatrix},    
\label{extend_dynamics_lstm}
\end{equation}
where $\boldsymbol{u}^n_t(0)$ denotes the first element of input sequence in $\boldsymbol{u}^n_t$ and $\ell(\boldsymbol{u}^n_t)= \boldsymbol{u}^n_{t+1}$. The Koopman operator \( \mathcal{K} : \mathcal{O} \to \mathcal{O} \), associated with the system in (\ref{extend_dynamics_lstm}) is defined as:
\begin{equation}
\mathcal{K}^n \varphi^n(\zeta^n_{t}) = \varphi^n(F^n(\zeta^n_t)),
\label{extend_koopman_lstm}
\end{equation}
where \( \varphi^n \) is an element of the observable space \( \mathcal{O} \), which is an infinite-dimensional Hilbert space.

To approximate each Koopman operator \( \mathcal{K}^n \) in (\ref{extend_koopman_lstm}), we adopt the DMDc method from \cite{proctor2016dynamic}, yielding a linear approximation of the nonlinear dynamics in (\ref{sys_s_1}):
\begin{equation}
    \boldsymbol{\psi}^n\left(\begin{bmatrix}
\mathbf{c}^n_{t} \\
\mathbf{h}^n_{t}
\end{bmatrix}\right) = \mathbf{A}^n\boldsymbol{\psi}^n\left(\begin{bmatrix}
\mathbf{c}^n_{t-1} \\
\mathbf{h}^n_{t-1}
\end{bmatrix}\right) + \mathbf{B}^n \mathbf{u}^n_{t}(0), 
\end{equation}
where observer $\boldsymbol{\psi}^n$ is defined as $\boldsymbol{\psi}^n\left(\begin{bmatrix}
\mathbf{c}^n_{t} \\
\mathbf{h}^n_{t}
\end{bmatrix}\right) = \begin{bmatrix}
\mathbf{c}^n_{t} \\
\mathbf{h}^n_{t}
\end{bmatrix}
$, which satisfies the requirement that the space of observables is sufficiently rich. Consequently, we can drive the final linear approximation of the nonlinear dynamics in eq. (\ref{sys_s_2}) as:
\begin{equation}
   \begin{bmatrix}
    \mathbf{c}^n_{t} \\
    \mathbf{h}^n_{t} 
\end{bmatrix} = \mathbf{A}^n \begin{bmatrix}
    \mathbf{c}^n_{t-1} \\
    \mathbf{h}^n_{t-1} 
\end{bmatrix} + \mathbf{B}^n \mathbf{u}^n_{t}(0),
    \label{DMDc_d}
\end{equation}
where \( \mathbf{A}^n \in \mathbb{R}^{2n^n_{\text{hid}} \times 2n^n_{\text{hid}}} \) is the Koopman matrix and \( \mathbf{B}^n \in \mathbb{R}^{2n^n_{\text{hid}} \times n^n_{\text{in}}} \) is the input matrix of layer $n$. The numerical procedure for computing \(\mathbf{A}^n\) and \( \mathbf{B}^n \) is discussed in \textbf{Appendix~I}. Equation (\ref{DMDc_d}) presents an interpretable model of the dynamics associated with the evolution of states of each layer in a trained deep LSTM network. Based on the linear representation of layer $n$ (eq.~\eqref{DMDc_d}) and the inter-layer connections, the dynamics governing the deep LSTM network can be expressed as: 
\begin{align}
   \begin{bmatrix}
    {\mathbf{s}}^1_{t} \\
    \vdots\\
    {\mathbf{s}}^N_{t}
\end{bmatrix} &= \mathbf{A} \begin{bmatrix}
    {\mathbf{s}}^1_{t-1} \\
    \vdots\\
    {\mathbf{s}}^N_{t-1} 
\end{bmatrix} + \mathbf{B} \mathbf{x}_t,\label{DMDc_deep_total}\\
\mathbf{q}_t &= g(\mathbf{C}\,\mathbf{s}_{t-\tau:t}),
    \label{DMDc_deep_total_2}
\end{align}
where $\mathbf{s}^n_{t}= \begin{bmatrix}
    \mathbf{c}^n_{t} \\
    \mathbf{h}^n_{t} 
\end{bmatrix}$ and matrices $\mathbf{A}$ and $\mathbf{B}$ are defined as:
\begin{align}
\mathbf{A}&=\begin{bmatrix}
\mathbf{A}^1 & 0 & 0 & \dots & 0 \\
\mathbf{B}^2\mathbf{C}^1 & \mathbf{A}^2 & 0 & \dots & 0 \\
0 & \mathbf{B}^3\mathbf{C}^2 & \mathbf{A}^3 & \ddots & \vdots \\
\vdots & \vdots & \ddots & \ddots & 0 \\
0 & 0 & \dots & \mathbf{B}^{N}\mathbf{C}^{N-1} & \mathbf{A}^N
\end{bmatrix},\nonumber \\
\mathbf{B} &= [\mathbf{B}^1,0, \dots, 0 ]^T,
\label{full_matrix}
\end{align}
\noindent where $\mathbf{C}^n$ is a projection matrix selecting the hidden-state block, i.e.
$\mathbf{h}_t^{\,n} = \mathbf{C}^n\,\mathbf{s}_t^{\,n}$ with
$
\mathbf{C}^n = \begin{bmatrix}\mathbf{0}_{n_{\text{hid}}\times n_{\text{hid}}} \;&\; \mathbf{I}_{n_{\text{hid}}}\end{bmatrix}
$ and $\mathbf{0}_{n_{\text{hid}}\times n_{\text{hid}}}$ is the zero matrix, and
$\mathbf{I}_{n_{\text{hid}}}$ is the identity matrix. In addition, since LSTM output $\mathbf{q}_t$ depends only on the hidden states of layer $N$ ($\mathbf{h}^N_{t-\tau:t}$), we define projection matrix $\mathbf{C}$ such that: 
\({\mathbf{h}}^N_t = \mathbf{C} \mathbf{{{s}}}_{t},\) 
where \(\mathbf{{s}}_t=\begin{bmatrix}
    {\mathbf{s}}^1_{t} \\
    \vdots\\
    {\mathbf{s}}^N_{t}
\end{bmatrix} \) and $\mathbf{C} = [\mathbf{0} \quad \mathbf{I}_{n_{\text{hid}}^N}]$, in which $\mathbf{0}$ is a zero matrix of appropriate dimension and $\mathbf{I} \in \mathbb{R}^{n_{\text{hid}}^n\times n_{\text{hid}}^n}$ is an identity matrix.

In Section III.B, we modeled a domain shift as an additive disturbance. To incorporate the domain shifts into the proposed interpretable model, eq. (\ref{DMDc_d}) is adjusted as:
\begin{align}
   \begin{bmatrix}
    \hat{{\mathbf{s}}}^1_{t} \\
    \vdots\\
    \hat{{\mathbf{s}}}^N_{t}
\end{bmatrix} &= \mathbf{A} \begin{bmatrix}
    \hat{\mathbf{s}}^1_{t-1} \\
    \vdots\\
    \hat{\mathbf{s}}^N_{t-1} 
\end{bmatrix} + \mathbf{B} (\mathbf{x}_{t}+\mathbf{d}_{t}),\label{DMDc_d_dis}\\
\hat{\mathbf{q}}_t &= g(\mathbf{C}\,\hat{\mathbf{s}}_{t-\tau:t}),
    \label{DMDc_d_dis_2}
\end{align}
where $\hat{.}$ denotes the states under the domain shift and $\mathbf{d}_t$ presents the domain shift modeled with vector $\mathbf{d}_t\sim \mathcal{P}_d$.

\section{Generalization Analysis of LSTM Networks}

Section~III introduced an interpretable model that characterizes the evolution of the states and their relationship to the output in a trained deep LSTM network. Based on this, we present a method for analyzing the generalization by deriving a GE bound for the trained LSTM network. This bound is obtained by analyzing the LSTM output $\hat{\mathbf{q}}_t$ (eq.~ (\ref{DMDc_d_dis_2})) when the input data is sampled from a distribution that differs from the training distribution.

\subsection{Analyzing the Effect of Domain Shifts on the Hidden State}

In this subsection, we estimate how a domain shift can impact the trained LSTM's states in the worst-case scenario. Specifically, the H$_\infty$ norm is used to evaluate the maximum impact of the domain shift on the trained LSTM's states. 

Now, in the first step, we transfer the models (\ref{DMDc_deep_total}) and (\ref{DMDc_d_dis}) into the \( Z \)-domain. These models (\ref{DMDc_deep_total}) and (\ref{DMDc_d_dis}) transformed in \( Z \)-domain are:
\begin{align}
   z\mathbf{{s}}_z - z\mathbf{{s}}_{t=0} &= \mathbf{A} \mathbf{{s}}_z + \mathbf{B} \mathbf{x}_z, \label{DMDc_z}\\
    z\hat{{\mathbf{s}}}_z - z\hat{{\mathbf{s}}}_{t=0} &= \mathbf{A} \hat{{\mathbf{s}}}_z + \mathbf{B} \mathbf{x}_z + \mathbf{B} \mathbf{d}_z. \label{DMDc_d_z}
\end{align}
Accordingly, the transfer function from $\mathbf{d}_z$ to $\hat{\mathbf{{s}}}_z$ and $\mathbf{{s}}_z$ can be calculated as:
\begin{align*}
   (z\mathbf{I}- \mathbf{A}) \mathbf{{s}}_z &= z\mathbf{{s}}_{t=0} + \mathbf{B} \mathbf{x}_z,\\
   \mathbf{{s}}_z &= (z\mathbf{I}- \mathbf{A})^{-1} (z\mathbf{{s}}_{t=0} + \mathbf{B} \mathbf{x}_z), \; \; \mbox{and} \\
(z\mathbf{I}-\mathbf{A}) \hat{{\mathbf{s}}}_z &=   z\hat{{\mathbf{s}}}_{t=0} + \mathbf{B} \mathbf{x}_z + \mathbf{B} \mathbf{d}_z,\\
    \hat{{\mathbf{s}}}_z &=  (z\mathbf{I}-\mathbf{A})^{-1}( z\hat{{\mathbf{s}}}_{t=0} + \mathbf{B} \mathbf{x}_z + \mathbf{B} \mathbf{d}_z).
\end{align*}
\vspace{-5pt}

\noindent\textbf{Assumption 4}. We assume that $\hat{{\mathbf{s}}}_{t=0} = {\mathbf{{s}}}_{t=0}$.\\
\vspace{-5pt}

With the above assumption, the transfer function from $\mathbf{d}_z$ to $\hat{{\mathbf{s}}}_z-{\mathbf{s}}_z$ can be calculated as:
\begin{align}
    \hat{{\mathbf{s}}}_z - \mathbf{{s}}_z &=  (z\mathbf{I}-\mathbf{A})^{-1}( \mathbf{B} \mathbf{d}_z),\label{diff}\\ 
    {\mathbf{T}}_z^{ds} &= (z\mathbf{I}-\mathbf{A})^{-1}\mathbf{B}.
    \label{transfer_z}
\end{align} 
Consequently, we can calculate the H\( _\infty \) norm of the transfer function \({\mathbf{T}}_z^{ds} \) as:
\begin{equation}
\| {\mathbf{T}}_z^{ds}  \|_{\text{H}_\infty} = \sup_{\omega \in [0, \pi]} \sigma_{\text{max}}\left( {(e^{j\omega}\mathbf{I} - \mathbf{A})^{-1}\mathbf{B}} \right).
\end{equation}
Considering projection matrix $\mathbf{C}$, the H\( _\infty \) norm of the transfer function from $\mathbf{d}_z$ to $\hat{{\mathbf{h}}}^N_z-{\mathbf{h}}^N_z$ (\({\mathbf{T}}_z^{dh} \)) is obtained as:
\begin{equation}
\| {\mathbf{T}}_z^{dh}  \|_{\text{H}_\infty} = \sup_{\omega \in [0, \pi]} \sigma_{\text{max}}\left( \mathbf{C}{(e^{j\omega}\mathbf{I} - \mathbf{A})^{-1}\mathbf{B}} \right),
\end{equation}
where ${\mathbf{T}}_z^{dh} = \mathbf{C}{(z\mathbf{I} - \mathbf{A})^{-1}\mathbf{B}}$ is the transfer function from $\mathbf{d}_z$ to $\hat{{\mathbf{h}}}^N_z-{\mathbf{h}}^N_z$.

\noindent
\textbf{Assumption 5}. We assume that $\mathbf{T}_z^{dh}$ is a stable transfer function, i.e. all poles of $\mathbf{T}_z^{dh}$ lie strictly inside the unit circle. 
This assumption is reasonable because a well-trained LSTM should exhibit a stable state evolution on its operating trajectories.

\vspace{2pt}
\noindent \begin{theorem}
For a given trained LSTM, suppose that the domain shift $\{\mathbf d_t\}_{t=0}^T$ satisfies $\|\mathbf d\|_{\ell_2}\le \alpha$.
Let the linear map from $\mathbf d_z$ to $\Delta\mathbf h_z^N$
is the  $\mathbf T^{dh}_z$, then
\begin{align}
\sum_{t=0}^{T} \big\|\hat{\mathbf h}^{\,N}_t - \mathbf h^{\,N}_t\big\|_2^2
&\;\le\; \|\mathbf T^{dh}_z\|_{H_\infty}^{\,2}\,\alpha^{2}, \label{eq:l2-sum-bound}\\[4pt]
\max_{0\le t\le T}\big\|\hat{\mathbf h}^{\,N}_t - \mathbf h^{\,N}_t\big\|_2
&\;\le\; \|\mathbf T^{dh}_z\|_{H_\infty}\,\alpha. \label{eq:max-bound}
\end{align}
\end{theorem}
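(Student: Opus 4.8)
The plan is to reduce both inequalities to the induced-$\ell_2$-gain characterization of the $\mathrm{H}_\infty$ norm recalled in Section~II.E, applied to the disturbance-to-hidden-state channel $\mathbf T^{dh}_z$. Write $\Delta\mathbf h^N_t := \hat{\mathbf h}^{\,N}_t - \mathbf h^{\,N}_t$. By Assumption~4 the initial-condition terms in (\ref{DMDc_z})--(\ref{DMDc_d_z}) cancel, so (\ref{diff}) composed with the projection $\mathbf h^N = \mathbf C\mathbf s$ gives the purely disturbance-driven relation $\Delta\mathbf h^N_z = \mathbf T^{dh}_z\,\mathbf d_z$ in the $Z$-domain, with $\mathbf T^{dh}_z = \mathbf C(z\mathbf I-\mathbf A)^{-1}\mathbf B$. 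This is exactly the linear map named in the hypothesis, so the identification requires no new work.

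First I would extend the finite-horizon disturbance $\{\mathbf d_t\}_{t=0}^{T}$ to a causal signal on all $k\ge 0$ by zero-padding ($\mathbf d_k := 0$ for $k>T$); this leaves $\|\mathbf d\|_{\ell_2}$ unchanged, so $\|\mathbf d\|_{\ell_2}\le\alpha$ still holds, and it makes $\Delta\mathbf h^N$ a well-defined causal signal on $k\ge 0$ (the linear surrogate (\ref{DMDc_deep_total}) can be iterated for all $t$). Then, using Assumption~5 (stability of $\mathbf T^{dh}_z$) together with the fact from Section~II.E that a stable transfer function has $\mathrm H_\infty$ norm equal to its induced $\ell_2$ gain, I get $\|\Delta\mathbf h^N\|_{\ell_2}\le \|\mathbf T^{dh}_z\|_{H_\infty}\,\|\mathbf d\|_{\ell_2}\le \|\mathbf T^{dh}_z\|_{H_\infty}\,\alpha$, and hence $\sum_{k=0}^{\infty}\|\Delta\mathbf h^N_k\|_2^2\le \|\mathbf T^{dh}_z\|_{H_\infty}^{2}\,\alpha^{2}$.

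The two displayed bounds then follow by elementary truncation. For (\ref{eq:l2-sum-bound}), the finite sum over $t=0,\dots,T$ is dominated by the full series: $\sum_{t=0}^{T}\|\Delta\mathbf h^N_t\|_2^2\le\sum_{k=0}^{\infty}\|\Delta\mathbf h^N_k\|_2^2\le\|\mathbf T^{dh}_z\|_{H_\infty}^{2}\alpha^{2}$. For (\ref{eq:max-bound}), for every fixed $t\in\{0,\dots,T\}$ the single term is dominated by the same sum, $\|\Delta\mathbf h^N_t\|_2^2\le\sum_{t'=0}^{T}\|\Delta\mathbf h^N_{t'}\|_2^2\le\|\mathbf T^{dh}_z\|_{H_\infty}^{2}\alpha^{2}$; taking the maximum over $t$ and then a square root gives the claim.

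There is no deep obstacle: the argument is essentially a one-line application of the Bounded Real / $\mathrm H_\infty$-gain inequality. The only points needing care are (i) making explicit that Assumption~4 is what removes the initial-state contribution, so $\Delta\mathbf h^N$ is a linear functional of $\mathbf d$ alone; (ii) legitimizing the infinite-horizon $\ell_2$-gain bound for a finite-horizon input by zero-padding; and (iii) relying on Assumption~5 so that $\|\Delta\mathbf h^N\|_{\ell_2}$ is finite and the bound is non-vacuous. If a self-contained argument were desired in place of citing the gain fact, one could instead invoke Parseval on the unit circle, $\|\Delta\mathbf h^N\|_{\ell_2}^2=\frac{1}{2\pi}\int_{-\pi}^{\pi}\|\mathbf T^{dh}_z(e^{j\omega})\mathbf d_z(e^{j\omega})\|_2^2\,d\omega\le \big(\sup_{\omega}\sigma_{\max}(\mathbf T^{dh}_z(e^{j\omega}))\big)^2\,\|\mathbf d\|_{\ell_2}^2$, and recognize the supremum as $\|\mathbf T^{dh}_z\|_{H_\infty}^2$.
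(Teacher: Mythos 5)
Your proposal is correct and follows essentially the same route as the paper's own proof in Appendix~II: apply the induced-$\ell_2$-gain characterization of the $\mathrm H_\infty$ norm to $\mathbf T^{dh}_z$ under Assumptions~4 and~5, square to get the sum bound, and bound the maximum by the sum. Your additional remarks on zero-padding the finite-horizon disturbance and the Parseval alternative are more careful than the paper's one-line argument but do not change the substance.
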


\begin{proof}
See \textbf{Appendix II}.
\end{proof}
\noindent \textbf{Interpretation of $\| {\mathbf{d}} \|_{\ell_2} \leq \alpha$ using Wasserstein distance}:\\
We model the domain shift by an additive disturbance
\(\hat{\mathbf x}_t=\mathbf x_t+\mathbf d_t\), therefore the shifted input law is the
distribution of \(\hat{\mathbf x}_{0:T}=\mathbf x_{0:T}+\mathbf d_{0:T}\): 
\[
(\hat{\mathbf x}_0,\dots,\hat{\mathbf x}_T)\sim \hat{\mathcal P}^x_i,
\]
where 
\(\hat{\mathcal P}^x_i=\mathcal P^x_i \!\circledast\! \mathcal P_d\text{.}
\) Consider \(W_1\) on the sequence space with ground metric
\(c(\mathbf x_{0:T},\hat{\mathbf x}_{0:T})=\|\hat{\mathbf x}_{0:T}-\mathbf x_{0:T}\|_{\ell_2}\).
Using the coupling \((\hat{\mathbf x}_{0:T},\mathbf x_{0:T})=(\mathbf x_{0:T}+\mathbf d_{0:T},\,\mathbf x_{0:T})\),
\[
W_1\big(\hat{\mathcal P}^x_i,\mathcal P^x_i\big)
\;\le\; \mathbb E\big[\|\mathbf d_{0:T}\|_{\ell_2}\big].
\]
Since  $\|\mathbf d\|_{\ell_2}\le \alpha$, we get $W_1\big(\hat{\mathcal P}^x_i,\mathcal P^x_i\big) \;\le\; \alpha$.
In other words, bounding $\|{\mathbf{d}} \|_{\ell_2}$ directly controls the Wasserstein distance between the target and the training dataset.

\subsection{OOD Generalization Bound for Domain Shifts}

This subsection derives a OOD generalization bound for a trained deep LSTM network under the domain shift. Specifically, we derive a bound on the GE in the presence of domain shift. We use \textbf{Theorem~1}, which upper-bounds the maximum impact of domain shift on the last-layer hidden state.


We note that the function \( g \), which maps the hidden state \( \mathbf{h}^N_{t-\tau:t} \) to the output \( \mathbf{q}_t \), can be either a deterministic or a probabilistic function. Also, we make the following assumption.

\vspace{2pt}
\noindent
\textbf{Assumption 6}.
Let $\tau\in\mathbb{N}$ and write the hidden-state window
${\mathbf{h}}^N_{t-\tau:t}:=[{{\mathbf{h}}^N_{t-\tau}}^\top,\ldots,{{\mathbf{h}}^N_t}^\top]^\top$.
Assume $g:\mathbb{R}^{(\tau+1)n^N_h}\to\mathbb{R}^{n_{\text{out}}}$ is Lipschitz
(or Lipschitz almost surely if $g$ is probabilistic), i.e. there exists a constant $G>0$ such that
\[
\| g(\hat{\mathbf{h}}^N_{t-\tau:t}) - g(\mathbf{h}^N_{t-\tau:t}) \|_2 \leq G \cdot \sqrt{\tau+1} \cdot \max_{i\in [t-\tau:t]}
\| \hat{\mathbf{h}}^N_i - \mathbf{h}^N_i \|_2,
\]
\[
\| g(\hat{\mathbf{h}}^N_{t-\tau:t}) - g(\mathbf{h}^N_{t-\tau:t}) \|_2 \leq G \cdot \sqrt{\tau+1} \cdot \|{\mathbf{T}}_z^{dh}\|_{H_\infty} \cdot \alpha.
\]

It is worth noting that, based on the structural knowledge of $g(.)$, we can derive tighter bounds for $\| g(\hat{\mathbf{h}}^N_{t-\tau:t}) - g(\mathbf{h}^N_{t-\tau:t}) \|_2$ that may avoid the extra $\sqrt{\tau+1}$ factor and are therefore tighter. For deriving a generalization bound under domain shifts, we assume the following.

\vspace{2pt}
\noindent
\textbf{Assumption 7}. Assume the domain shift $\mathbf{d}_t$ such that \( \|{\mathbf{d}}\|_{\ell_2} \leq \alpha \) causes outputs to change by at most $\beta$:
\( \|\hat{\mathbf{y}} - \mathbf{y}\|_{\ell_2} \leq \beta.\)

\vspace{2pt}
\noindent
Note that the relationship between $\beta$ and $\alpha$ is application-dependent and can
be derived from domain knowledge about the data-generating mechanism. For example, in our experiments (presented in Section VI), the LSTM predicts the future samples from a history of past samples. This corresponds to a shift operator with induced $\ell_2$ gain equal to $1$, hence $\beta=\alpha$.

Let \(\mathcal{L}(y_t, q_t)\) denote the regression loss between the true target \(y_t\) and the prediction \(q_t\).

\vspace{2pt}
\noindent
\textbf{Assumption 8}.  The loss function \( \mathcal{L}(y_t, q_t) \) is Lipschitz continuous with respect to its second argument, with Lipschitz constant \( L \); that is,
\[
|\mathcal{L}(\hat{\mathbf{y}}_t, \hat{\mathbf{q}}_t) - \mathcal{L}(\mathbf{y}_t, \mathbf{q}_t)| \leq L \cdot (\|\hat{\mathbf{y}}_t - \mathbf{y}_t\|_2 +
\|\hat{\mathbf{q}}_t - \mathbf{q}_t\|_2).
\]

\vspace{2pt}
\noindent
\begin{corollary}
Assume the Lipschitz constants $G$ and \( L \) for functions $g$ and \( \mathcal{L} \), respectively. Given an LSTM network trained for a regression problem, for any domain shift $\mathbf{d}_t$ such that \( \|{\mathbf{d}}\|_{\ell_2} \leq \alpha \), the generalization error is bounded by:
\begin{align}
\left| \mathbb{E}\left[\mathcal{L}(\hat{\mathbf{y}}_t, \hat{{\mathbf{q}}}_t)\right] - \mathbb{E}\left[\mathcal{L}(\mathbf{y}_t, {\mathbf{q}}_t)\right] \right|   
&\leq L\cdot (\beta+
G \cdot \sqrt{\tau+1} \nonumber\\ &\cdot\|{\mathbf{T}}_z^{dh}\|_{H_\infty} \cdot \alpha). 
\end{align} 

\end{corollary}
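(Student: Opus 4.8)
The plan is to collapse the claimed inequality into a short chain of Lipschitz estimates, each of which is already furnished by Theorem~1 and Assumptions~6--8. First I would pass the outer absolute value inside the expectation via the triangle inequality (Jensen's inequality for the modulus),
\[
\left| \mathbb{E}\big[\mathcal{L}(\hat{\mathbf{y}}_t,\hat{\mathbf{q}}_t)\big] - \mathbb{E}\big[\mathcal{L}(\mathbf{y}_t,\mathbf{q}_t)\big] \right|
\;\le\; \mathbb{E}\!\left[\, \big|\mathcal{L}(\hat{\mathbf{y}}_t,\hat{\mathbf{q}}_t) - \mathcal{L}(\mathbf{y}_t,\mathbf{q}_t)\big| \,\right],
\]
and then apply Assumption~8 to obtain, inside the expectation, the pointwise bound $L\big(\|\hat{\mathbf{y}}_t-\mathbf{y}_t\|_2 + \|\hat{\mathbf{q}}_t-\mathbf{q}_t\|_2\big)$. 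By linearity of expectation this reduces the task to controlling $\mathbb{E}\|\hat{\mathbf{y}}_t-\mathbf{y}_t\|_2$ and $\mathbb{E}\|\hat{\mathbf{q}}_t-\mathbf{q}_t\|_2$ separately.

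For the first term, I would use that any single-time-step norm is dominated by the full-sequence $\ell_2$ norm, $\|\hat{\mathbf{y}}_t-\mathbf{y}_t\|_2 \le \|\hat{\mathbf{y}}-\mathbf{y}\|_{\ell_2}$; Assumption~7 then gives $\|\hat{\mathbf{y}}_t-\mathbf{y}_t\|_2 \le \beta$ deterministically, hence $\mathbb{E}\|\hat{\mathbf{y}}_t-\mathbf{y}_t\|_2 \le \beta$. For the second term, I would write $\mathbf{q}_t = g(\mathbf{h}^N_{t-\tau:t})$ and $\hat{\mathbf{q}}_t = g(\hat{\mathbf{h}}^N_{t-\tau:t})$, invoke the Lipschitz estimate for $g$ from Assumption~6, and substitute the uniform-in-$t$ hidden-state bound $\max_{0\le i\le T}\|\hat{\mathbf{h}}^N_i-\mathbf{h}^N_i\|_2 \le \|\mathbf{T}^{dh}_z\|_{H_\infty}\,\alpha$ supplied by inequality~(\ref{eq:max-bound}) of Theorem~1. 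This yields $\|\hat{\mathbf{q}}_t-\mathbf{q}_t\|_2 \le G\sqrt{\tau+1}\,\|\mathbf{T}^{dh}_z\|_{H_\infty}\,\alpha$, which is exactly the second line of Assumption~6, so $\mathbb{E}\|\hat{\mathbf{q}}_t-\mathbf{q}_t\|_2$ is bounded by the same constant.

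Finally I would combine the two estimates. Since both are deterministic constants --- valid for every realization of the disturbance with $\|\mathbf{d}\|_{\ell_2}\le\alpha$, and almost surely if $g$ is probabilistic --- taking the expectation leaves them unchanged, giving
\[
\left| \mathbb{E}\big[\mathcal{L}(\hat{\mathbf{y}}_t,\hat{\mathbf{q}}_t)\big] - \mathbb{E}\big[\mathcal{L}(\mathbf{y}_t,\mathbf{q}_t)\big] \right|
\;\le\; L\big(\beta + G\sqrt{\tau+1}\,\|\mathbf{T}^{dh}_z\|_{H_\infty}\,\alpha\big).
\]
No individual step is technically hard --- the argument is essentially a telescoping of Lipschitz constants. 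The only points that need care are the two reductions from sequence-level quantities (the $\ell_2$ bound on $\hat{\mathbf{y}}-\mathbf{y}$ and the $\ell_2$-sum/max bound of Theorem~1) to the per-time-step norms that actually enter $\mathcal{L}$, and, when $g$ is stochastic, applying its Lipschitz property pathwise so that the expectation can subsequently be taken over the randomness of $g$ as well as over the data and the disturbance law $\mathcal{P}_d$.
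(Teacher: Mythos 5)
Your proposal is correct and follows essentially the same route as the paper's proof: the same chain of Lipschitz estimates via Assumptions~6--8 and the max-bound of Theorem~1, differing only in that you apply Jensen's inequality at the start (moving the modulus inside the expectation) whereas the paper derives the deterministic pointwise bound first and invokes Jensen at the end --- an equivalent ordering. Your explicit reduction $\|\hat{\mathbf{y}}_t-\mathbf{y}_t\|_2 \le \|\hat{\mathbf{y}}-\mathbf{y}\|_{\ell_2} \le \beta$ is a slightly more careful handling of Assumption~7 than the paper's, which uses the per-time-step bound without comment.
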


\begin{proof}
See \textbf{Appendix III}.
\end{proof}

To derive \textbf{Corollary~1}, we have assumed that  the loss function
\( \mathcal{L}(\mathbf{y}_t, \mathbf{q}_t) \) is Lipschitz continuous. However, this assumption may fail in some cases, e.g., when the loss
is calculated as the mean squared error (MSE). Nevertheless, we can obtain a useful bound on the difference in MSE loss
by exploiting the structure of the loss as follows: 
\begin{align*}
    \big|\mathbb E\, [\mathcal{L}(\hat{\mathbf y}_t,\hat{\mathbf q}_t)]-\mathbb E\,[\mathcal{L}(\mathbf y_t,\mathbf q_t)]\big|&\le 2\,\mathbb E\,\sqrt{\mathcal{L}(\mathbf y_t,\mathbf q_t)}\,(\beta\\&+G \cdot \sqrt{\tau+1} \cdot\|{\mathbf{T}}_z^{dh}\|_{H_\infty} \cdot \alpha)\\+(\beta&+G \cdot \sqrt{\tau+1} \cdot\|{\mathbf{T}}_z^{dh}\|_{H_\infty} \cdot \alpha)^2.
\end{align*}
\begin{proof} 
See \textbf{Appendix IV}.
\end{proof}

\vspace{4pt}
\section{A Method for Domain Generalization}\label{OOD_method}
\begin{figure}[!t]
    \centering
    \includegraphics[width=3.5 in]{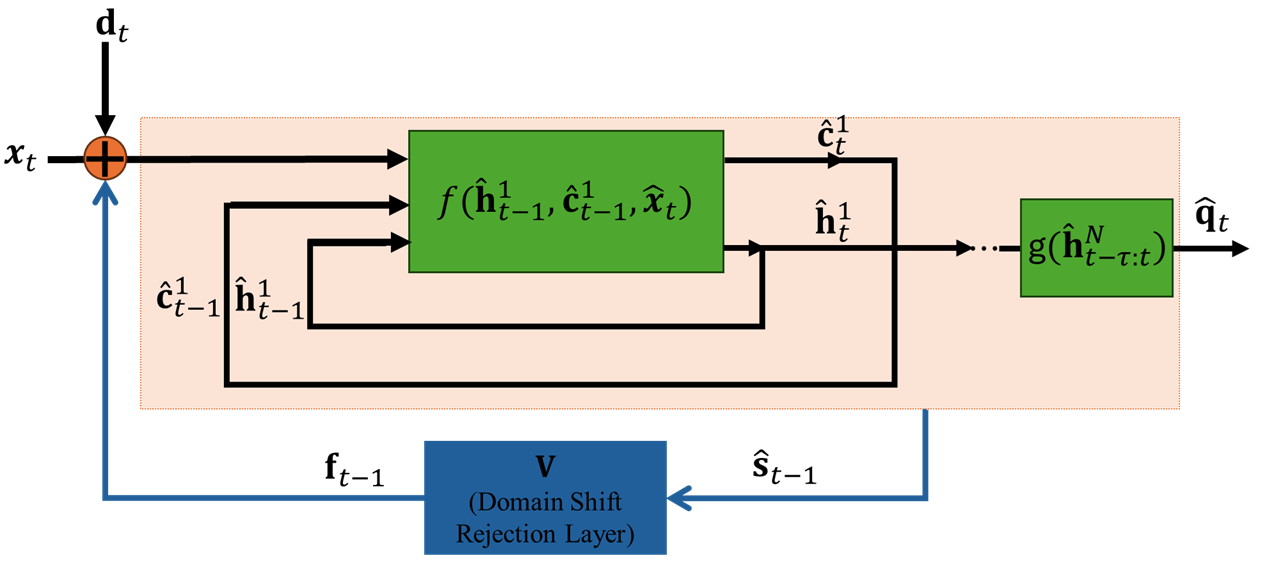}
  \caption{Domain shift rejection architecture of a trained LSTM}
  \label{Architecture_modified}
\end{figure}

The goal of domain generalization (DG) is to improve the performance on OOD data. In this section, we propose a DG method that reduces the OOD generalization error of a trained LSTM under input distribution shifts. Our approach leverages the interpretable representation in eq.~\eqref{DMDc_d_dis} to construct a dynamic architecture of the trained LSTM.

\vspace{2pt}
\noindent\textbf{Assumption 9}. We consider the domain shift, where the true value of $\mathbf{y}_{t}$ remains unchanged under the shift (i.e. $\hat{\mathbf{y}}_{t} = \mathbf{y}_{t}$).

This assumption holds in several practical settings, e.g., when shifts arise from adversarial perturbations or natural variations that affect the observed inputs while preserving the underlying target.

Specifically, to enhance the robustness against the input disturbances $\mathbf{d}_t$, we augment the trained LSTM with a state-feedback term ${\mathbf{f}}_{t-1}$ (Fig.~\ref{Architecture_modified}). Using eq.~\eqref{DMDc_d_dis}, we design ${\mathbf{f}}_{t-1}$ by solving an LMI-constrained optimization problem that minimizes the effect of $\mathbf{d}_t$ on the LSTM state dynamics, thereby reducing the deviation $\Delta\mathbf{h}_t^N := \hat{\mathbf{h}}_t^{\,N} - \mathbf{h}_t^{\,N}$. Then, considering \textbf{Assumption~7} and \textbf{Assumption~9} (which imply that $\beta = 0$), and based on \textbf{Corollary~1}, we can conclude that the generalization error  decreases for the learned model.

\subsection{Domain Shift Rejection of State Deviation}

Based on the proposed interpretable model in eq.~(\ref{DMDc_deep_total}), the nominal dynamics of the trained LSTM network is:
\begin{align*}
\mathbf{s}_t = &\mathbf{A} \mathbf{s}_{t-1} + \mathbf{B} \mathbf{x}_t,\nonumber\\ 
{\mathbf{h}}^N_t =  &\mathbf{C} \mathbf{{{s}}}_t,
\end{align*}
where \(\mathbf{{s}}_t=\begin{bmatrix}
    {\mathbf{s}}^1_{t} \\
    \vdots\\
    {\mathbf{s}}^N_{t}
\end{bmatrix} \), \(\mathbf{{s}}_t \in \mathbb{R}^{n_{\text{total}}}\), and $\mathbf{C} = [\mathbf{0} \quad \mathbf{I}_{n_{\text{hid}}^N}]$. Under the domain shift with state-feedback control input $\mathbf{f}_{t-1}$ (Fig. \ref{Architecture_modified}), the dynamics of the LSTM  network is given by:
\begin{align}
\hat{\mathbf{s}}_t &= \mathbf{A} \hat{\mathbf{s}}_{t-1} + \mathbf{B}(\mathbf{x}_t + \mathbf{d}_t + \mathbf{f}_{t-1}),\nonumber\\ 
\hat{{\mathbf{h}}}^N_t &= \mathbf{C} \mathbf{\hat{{{s}}}}_t,
\end{align}
where \(\hat{{\mathbf{s}}}_t =\begin{bmatrix}
    \hat{\mathbf{s}}^1_{t} \\
    \vdots\\
    \hat{\mathbf{s}}^N_{t},
\end{bmatrix} \) and the state-feedback input is given by:
\begin{equation}
\mathbf{f}_{t-1} = \mathbf{V} \hat{\mathbf{s}}_{t-1},
\end{equation}
where $\mathbf{V}\in \mathbb{R}^{n_{\text{in}} \times n_{\text{total}}}$ is the modifying matrix to be designed.

To design $\mathbf{V}$, we first define the state deviation between $\hat{\mathbf{s}}_t$ and the nominal states as:
\begin{equation}
\boldsymbol{\delta}_t := \hat{\mathbf{s}}_t - \mathbf{s}_t.
\end{equation}
Substituting the values of $\hat{\mathbf{s}}_t$ and  $\mathbf{s}_t$ yields:
\begin{align}
\boldsymbol{\delta}_t &= \hat{\mathbf{s}}_t - \mathbf{s}_t \nonumber \\
&= \mathbf{A} \hat{\mathbf{s}}_{t-1} + \mathbf{B}(\mathbf{x}_t + \mathbf{d}_t + \mathbf{V} \hat{\mathbf{s}}_{t-1}) - \left( \mathbf{A} \mathbf{s}_{t-1} + \mathbf{B} \mathbf{x}_t \right) \nonumber \\
&= \mathbf{A}(\hat{\mathbf{s}}_{t-1} - \mathbf{s}_{t-1}) + \mathbf{B} \mathbf{d}_t + \mathbf{B} \mathbf{V} \hat{\mathbf{s}}_{t-1} \nonumber \\
&= \mathbf{A} \boldsymbol{\delta}_{t-1} + \mathbf{B} \mathbf{d}_t + \mathbf{B} \mathbf{V} (\mathbf{s}_{t-1} + \boldsymbol{\delta}_{t-1}) \nonumber \\
&= (\mathbf{A} + \mathbf{B} \mathbf{V}) \boldsymbol{\delta}_{t-1} + \mathbf{B} \mathbf{V} \mathbf{s}_{t-1} + \mathbf{B} \mathbf{d}_t.
\label{state_deviation}
\end{align}
Transferring eq.~(\ref{state_deviation})
into the $Z$-domain:
\[\boldsymbol{\delta}_z = (\mathbf{A} + \mathbf{B}\mathbf{V}) z^{-1} \boldsymbol{\delta}_z 
+ \mathbf{B}\mathbf{V} z^{-1} \mathbf{s}_z + \mathbf{B}\mathbf{d}_z,\]
 \[\left( \mathbf{I} - z^{-1}(\mathbf{A} + \mathbf{B}\mathbf{V}) \right) \boldsymbol{\delta}_z 
= \mathbf{B}\mathbf{V} z^{-1} \mathbf{s}_z + \mathbf{B}\mathbf{d}_z, \]
\[\boldsymbol{\delta}_z = 
\left( \mathbf{I} - z^{-1}(\mathbf{A} + \mathbf{B}\mathbf{V}) \right)^{-1}
\left( \mathbf{B}\mathbf{V} z^{-1} \mathbf{s}_z + \mathbf{B}\mathbf{d}_z \right).\]
Therefore, the transfer function from $\mathbf{d}_z$ to $\boldsymbol{\delta}_z$ is given by:
\[\mathbf{T}^{\delta d}_z = 
\left( \mathbf{I} - z^{-1}(\mathbf{A} + \mathbf{B}\mathbf{V}) \right)^{-1} \mathbf{B}.\]
The goal of using the static state-feedback gain
$\mathbf{V}$ is to attenuate the effect of the disturbance $\mathbf{d}_t$ on $\boldsymbol{\delta}_t$. 
However, the term $\mathbf{B}\mathbf{V}\mathbf{s}_{t-1}$ acts as an additional known signal that also influences $\boldsymbol{\delta}_t$ unintentionally. The transfer function from $\mathbf{s}_z$ to $\boldsymbol{\delta}_z$ is given by:
\[\mathbf{T}^{\delta s}_z
= \left( \mathbf{I} - z^{-1}(\mathbf{A} + \mathbf{B}\mathbf{V}) \right)^{-1}
\, \mathbf{B}\mathbf{V}\, z^{-1}.\]
In the next step, by considering the effect of both $\mathbf{d}_t$ and $\mathbf{s}_z$, we formulate the domain-shift rejection problem as an optimization problem in the $\text{H}_\infty$ sense as follows:
\begin{align}
\min_{\mathbf{V},\,\gamma>0}\quad & \gamma \nonumber \\
\text{subject to}\quad 
& \left\| \;\begin{bmatrix}\mathbf{T}^{\delta d}_z \;\; \mathbf{T}^{\delta s}_z\end{bmatrix} \;\right\|_{\text{H}_\infty} \;<\; \gamma, \nonumber\\
& \|\boldsymbol{\delta}_{t}\|_2 < \infty,\; \forall t. \label{obj_deviation}
\end{align}

The optimization problem in eq.~\eqref{obj_deviation} implies that the matrix $\mathbf{V}$ should be designed so that the induced $\text{H}_\infty$ norm from the domain-shift input $\mathbf{d}_z$ and the unintended signal in the state-feedback path $\mathbf{s}_z$ to the state deviation $\boldsymbol{\delta}_z$ is minimized; equivalently,
from the stacked input $\big[\mathbf{d}_z^\top\;\mathbf{s}_z^\top\big]^\top$ to $\boldsymbol{\delta}_z$. In addition, the constraint $\|\boldsymbol{\delta}_{t}\|_2 < \infty$ ensures that the deviation $\boldsymbol{\delta}_t$ remains bounded and small despite the presence of domain shifts, thereby enforcing the stability of the system model described in eq.~(\ref{state_deviation}). Stability here means that states $\boldsymbol{\delta}_t$ of the system (\ref{state_deviation}) do not grow without bound over time.

\subsection{LMI-Constrained Optimization}

Considering the deviation dynamics given in eq.~(\ref{state_deviation}):
\begin{align*}
\boldsymbol{\delta}_t
&= (\mathbf{A} + \mathbf{B}\mathbf{V})\,\boldsymbol{\delta}_{t-1}
   + \mathbf{B}\mathbf{V}\,\mathbf{s}_{t-1}
   + \mathbf{B}\mathbf{d}_t,
\end{align*}
as a system, the performance output of this system can be defined as:
\begin{equation}
\mathbf{\kappa}_t \triangleq \boldsymbol{\delta}_t
\qquad \Rightarrow \quad
\mathbf{C}^{\delta} = \mathbf{I}.
\end{equation}
To ensure robustness against both $\mathbf{d}_t$ and $\mathbf{s}_{t-1}$, we treat them jointly as disturbances:
\begin{equation}
\mathbf{w}_t \triangleq
\begin{bmatrix}
\mathbf{d}_t \\[1mm]
\mathbf{s}_{t-1}
\end{bmatrix},
\qquad
\mathbf{B}_w \triangleq
\big[\, \mathbf{B} \;\; \mathbf{B}\mathbf{V} \,\big].
\end{equation}

To compute $\mathbf{V}$, we reformulate the problem in eq.~(\ref{obj_deviation}) as an
optimization problem with an LMI constraint using the
Bounded Real Lemma \cite{scherer2000linear}. 
This lemma provides a convex condition that guarantees both the internal stability of the closed loop and an
$\text{H}_\infty$ performance bound on the induced gain from $\mathbf{w}_t$ to $\mathbf{\kappa}_t$. To use this Lemma, we first introduce the standard transformations
\begin{equation}
\mathbf{R} \triangleq \mathbf{P}^{-1} \succ 0,
\qquad
\mathbf{K} \triangleq \mathbf{V}\mathbf{R}.
\end{equation}
Accordingly, the LMI optimization formulation to calculate matrices $\mathbf{R}\succ 0$, $\mathbf{K}$, and scalar $\gamma>0$ is as follows:
\begin{align}
\min_{\mathbf{R}\succ0,\,\mathbf{K},\,\gamma>0}\quad & \gamma \label{eq:opt-Hi}\\[2mm]
\text{subject to}\quad
&\begin{bmatrix}
\mathbf{R}
&
\mathbf{A}\mathbf{R}+\mathbf{B}\mathbf{K}
&
\mathbf{R}
&
\mathbf{0}
\\[1mm]
* &
\mathbf{R}
&
\mathbf{0}
&
\big[\,\mathbf{B}\;\; \mathbf{B}\,\big]
\\[1mm]
* & * &
\mathbf{I}
&
\mathbf{0}
\\[1mm]
* & * & * &
\gamma^2 \mathbf{I}_{n_d+n_{\text{total}}}
\end{bmatrix} \succ 0, \label{eq:lmi-Hi}
\end{align}
where $n_d$ and $n_{\text{total}}$ denote the dimensions of $\mathbf{d}_t$ and $\mathbf{s}_{t-1}$, respectively, the symbol $*$ indicates the transpose of the corresponding upper-right block to ensure matrix symmetry, and $\mathbf{0}$ denotes a zero block of the appropriate dimension.\\
If the LMI \eqref{eq:lmi-Hi} is feasible, the optimal state-feedback gain is obtained as
\begin{equation}
\;\mathbf{V} \;=\; \mathbf{K}\,\mathbf{R}^{-1}.\;
\end{equation}
The feasibility guarantees the internal stability of the closed-loop system and ensures that the induced
$\text{H}_\infty$ norm from the combined disturbance $[\mathbf{d}_t^\top\;\mathbf{s}_{t-1}^\top]^\top$ 
to the performance output $\mathbf{\kappa}_t$ satisfies
\[
\left\| \;\begin{bmatrix}\mathbf{T}^{\delta d}_z \;\; \mathbf{T}^{\delta s}_z\end{bmatrix} \;\right\|_{\text{H}_\infty} \;<\; \gamma.
\]

The constraint \( \mathbf{P} \succ 0 \) implies that \( \mathbf{P} \) is symmetric and positive definite, which enforces the stability of the system model described in eq.~(\ref{state_deviation}). 
Importantly, minimizing the effect of the domain shift on the states, that is, the objective function in (\ref{obj_deviation} and \ref{eq:lmi-Hi}), is only meaningful if the system remains stable.

For convex synthesis, we reparameterize the additional known signal that also influences $\boldsymbol{\delta}_t$ as 
\(
\mathbf{o}_t \triangleq \mathbf{V}\mathbf{s}_{t-1}
\),
so that the state sees \(\mathbf{o}_t\) through \(\mathbf{B}\).
Accordingly, in the LMI, we use the augmented input matrix
\[
\mathbf{B}_w \;\triangleq\; \big[\,\mathbf{B}\;\;\mathbf{B}\,\big],
\]
which yields an $\text{H}_\infty$ bound from $[\mathbf{d}_t^\top\;\mathbf{o}_t^\top]^\top$ to $\mathbf{\kappa}_t$.
The bound from the original $\mathbf{s}_{t-1}$ satisfies
\(
\|\mathbf{T}^{\kappa s}\|_{\text{H}_\infty}
\le \gamma\,\|\mathbf{V}\|_2.
\)

This convex optimization problem (\ref{eq:lmi-Hi}) can be efficiently solved using tools such as CVX (MATLAB), YALMIP (MATLAB), or CVXPY (Python). Once solved, the modifying matrix can typically be computed as \( \mathbf{V} = \mathbf{K} \mathbf{R}^{-1} \). 

\section{Experiments}

In this section, we evaluate the proposed generalization analysis and domain-generalization method, both based on the proposed interpretable model and on a temporal pattern-learning task. Specifically, we use a deep LSTM to predict the future electricity load (i.e. load forecasting), which is crucial to the operation, planning, and economics of modern power systems. The core purpose of short-term (hours to days ahead) load forecasting is to maintain the real-time balance between the supply and the demand. 
For training the deep LSTM model and evaluating our proposed approaches, we employ a real, open-source dataset exported from the open energy information (OpenEI) ecosystem operated by national renewable energy laboratory and backed by U.S. department of energy \cite{oedi_portal}. Here, we begin by describing the trained deep LSTM architecture used in our study. Our experimental setup is designed to validate the theory rather than to introduce the state-of-the-art neural network architectures. Next, we present simulation results that evaluate the generalization bound implied by \textbf{Corollary~1} for the trained LSTM and compare it with the OOD generalization bound obtained in \cite{weber2022certifying}, which applies to DNNs. Finally, we use the proposed domain-generalization method to improve the performance of the trained LSTM under the domain shift, thereby reducing its OOD generalization error under the input-distribution shifts.

\subsection{Experimental Setups}

\noindent
\textbf{Deep LSTM architecture.} 
We employ a stacked LSTM with two recurrent layers ($N=2$) and hidden size $n_{\text{hid}}=128$ per layer, followed by a linear projection head.
The projection $g(\cdot)$ is a linear layer (i.e. a fully connected map with no activation).
The single input feature is the facility-level electricity consumption (hourly readings, in kW), so the input size is $n_{\text{in}}=1$.
We use a sequence length of $96$, a prediction length of $24$, and a mini-batch size of $64$, as in the training script.
The model's parameters are learned by minimizing the mean-squared error (MSE) between the predicted and ground-truth future sequences (i.e. the loss function $\mathcal{L}$ is MSE).
For each of the $24$ forecast hours, we take the last $24$ hidden states of the top LSTM layer and apply the linear head time-wise, yielding the $24$ outputs:
\(
\{{q}_{t+1},\ldots,{q}_{t+24}\}
\;=\;
\{g(\mathbf{h}^N_{t-23}),\ldots,g(\mathbf{h}^N_{t})\}.
\)
At each time $t$, the model consumes a context window of the most recent $96$ samples and predicts the next $24$ hours:
\[
\mathbf{x}_{t-96+1:t}\ \mapsto\ {\mathbf{q}}_{t+1:t+24}.
\] 

\noindent
\textbf{Dataset}. We use a real dataset provided by OpenEI \cite{oedi_portal}. The dataset provides one full year of hourly California building-energy measurements, from 2018-01-01 01:00 to 2019-01-01 00:00, totaling 8{,}760 time-stamped records with no missing timestamps.
Each record contains 12 metered signals reported as hourly averages in kW, including {Electricity: Facility}, {Gas: Facility}, {Heating: Gas}, {Cooling: Electricity}, and multiple end-use electricity streams. 
In our forecasting experiments, we focus on the facility-level electricity series ({Electricity: Facility [kW] (Hourly)}) as the target. The data are split chronologically into training/validation/test with a $70\%/20\%/10\%$ ratio. Therefore, over one year (8,760 hours), the split is: training 6,132 hours (8.5 months), validation 1,752 hours (2.4 months), and test 876 hours (1.2 months).

\noindent
\textbf{Interpretable model for the trained deep LSTM.} To construct the interpretable model, we extract the hidden–state trajectories from the trained two-layer LSTM on the test set (876 hours, $\approx$1.2 months), yielding the time-aligned sequences of inputs and states for each layer. For the first LSTM layer, we collect current hidden states, next-step hidden states, and the corresponding inputs across all test samples. 
Similarly, for the second LSTM layer, we collect its hidden states, using the first layer’s outputs as inputs. 
We then apply the DMDc algorithm to each layer separately and estimate linear state-space matrices $\mathbf{A}^{(n)}$ and $\mathbf{B}^{(n)}$ for $n\in\{1,2\}$. 
This yields interpretable linear dynamical systems that approximate the hidden-state evolution. 
Finally, we assemble the full matrices $\mathbf{A}$ and $\mathbf{B}$ according to eq.~\eqref{full_matrix}.

\vspace{0.2cm}
\noindent\textbf{Generate domain shift.} Under domain shift, we generate testing data set with different domain by adding disturbance vector. Let $\mathcal{D} = \{(X_i, Y_i)\}_{i=1}^{N}$ denote the original test dataset, 
where $X_i \in \mathbb{R}^{T}$ represents the input sequence of length 
$T = 96$ hours, $Y_i \in \mathbb{R}^{M}$ represents the prediction 
target of length $M = 24$ hours, and $N$ denotes the number of sequences in the dataset. Under domain shift, we create a new dataset 
$\mathcal{D}' = \{(X'_i, Y'_i)\}_{i=1}^{N}$ by adding bounded disturbances $\mathbf{d}$ to the entire time series, which is constrained by the $\ell_\infty$-norm ($\|\mathbf d\|_\infty 
= \max_{t} |d_{t}|$):
\(
\|\mathbf d\|_{\ell_\infty} \le \alpha,
\)
where $\alpha$ represents the maximum element-wise disturbance magnitude, computed as
\(
\alpha = \lambda \cdot \sigma_{\text{data}},
\)
where $\lambda$ denotes the domain shift strength multiplier and 
$\sigma_{\text{data}}$ the standard deviation of the electricity demand data.

\noindent\textbf{Note:} Our theoretical bounds are stated in terms of $\|\mathbf d\|_{\ell_2}$, whereas in simulations we generate the disturbances by enforcing an element-wise magnitude constraint $\|\mathbf d\|_{\ell_\infty}\le \alpha$. For a length-$T$ sequence, these norms satisfy
\begin{equation*}
\|\mathbf d\|_{\ell_2} \le \sqrt{T}\,\|\mathbf d\|_{\ell_\infty} \le \sqrt{T}\,\alpha,
\end{equation*}
so, when applying the $\ell_2$-based GE bounds to our $\ell_\infty$-bounded disturbances, we substitute $\|\mathbf d\|_{\ell_2}$ by $\sqrt{T}\alpha$, which introduces the $\sqrt{T}$ factor in the reported bound.

We implement three variants of the domain shift to model different realistic scenarios:

\paragraph{ Uniform domain shift}
It applies random disturbances uniformly sampled. This represents mild, stochastic variations in the data distribution, such as measurement noise or minor sensor calibration drift. Each element of the disturbance is independently sampled from a uniform distribution, resulting in an average perturbation magnitude of approximately $\alpha/2$.

\paragraph{Fixed magnitude domain shift}
It represents the worst-case adversarial scenario where disturbances always saturate the $\ell_\infty$ bound. The sign of each disturbance is randomly chosen, but the magnitude is fixed at $\alpha$, ensuring \(
\|\mathbf d\|_\infty = \alpha.
\)
This enables testing the maximum generalization of the model under the strongest possible bounded shift, e.g., adversarial conditions or extreme sensor malfunction.
\paragraph{Mixture domain shift}
It combines the small and large shifts to the dataset:
\[
 d_{t} = 
\begin{cases}
\mathcal{U}(-0.1\alpha, 0.1\alpha), & \text{with probability } 0.7,\\[2pt]
\mathcal{U}(-\alpha, \alpha), & \text{with probability } 0.3.
\end{cases}
\]
After sampling, temporal smoothing is applied using a Gaussian filter with $\sigma = 12$ hours to ensure a realistic temporal continuity. Finally, the perturbation is rescaled if necessary to enforce the $\ell_\infty$ constraint:
\[
\mathbf d_{\text{final}} =
\begin{cases}
\mathbf d', & \text{if } \|\mathbf d'\|_\infty \leq \alpha,\\[2pt]
\mathbf d' \cdot \dfrac{0.95\,\alpha}{\|\mathbf d'\|_\infty}, & \text{otherwise},
\end{cases}
\]
where $\mathbf d'$ denotes the smoothed perturbation. This use case simulates the occasional sensor spikes combined with a typical measurement noise.

Domain shift classification is magnitude-dependent. We distinguish covariate shift from concept shift by comparing the (lag-0) Pearson cross-correlation between inputs and outputs before and after shift. Let $
\rho^{\text{orig}}_{XY}=\mathrm{Corr}(X,Y), 
\qquad
\rho^{\text{shift}}_{XY}=\mathrm{Corr}(X',Y').
$
If the shift is predominantly {covariate} (i.e. $P(X)$ changes while $P(Y\mid X)$ is unchanged), then the input--output coupling should remain stable, yielding
$
\rho^{\text{shift}}_{XY}\approx \rho^{\text{orig}}_{XY}\quad(\text{typically } >0.95).
$ In contrast, under the {concept} shift (i.e. $P(Y\mid X)$ changes), the coupling weakens and the correlation drops,
$
\rho^{\text{shift}}_{XY}\ll \rho^{\text{orig}}_{XY}\quad(\text{typically } <0.5).
$
For multi-step forecasting, we compute the horizon-wise correlations and average across horizons. To avoid spurious effects, we align the forecast horizon (lag-0), and standardize within each regime prior to correlation.
\begin{figure*}[!t]
  \centering
  \subfloat[Empirical bound vs.\ theoretical upper bound]{%
    \includegraphics[width=0.48\textwidth]{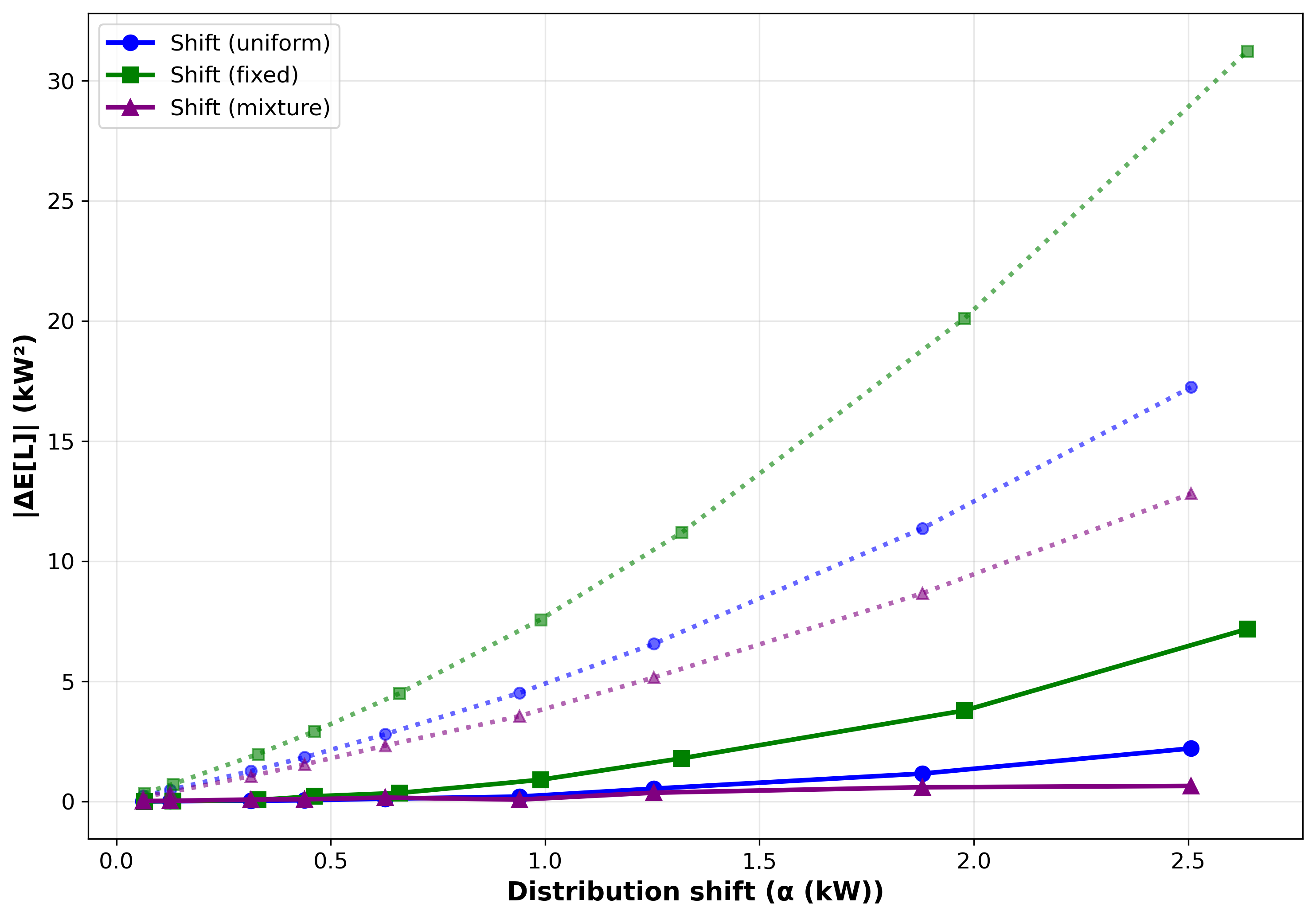}%
    \label{Emperical bound vs theoretical upper bound}}
  \hfil
  \subfloat[RMSE increase (kW)]{%
    \includegraphics[width=0.48\textwidth]{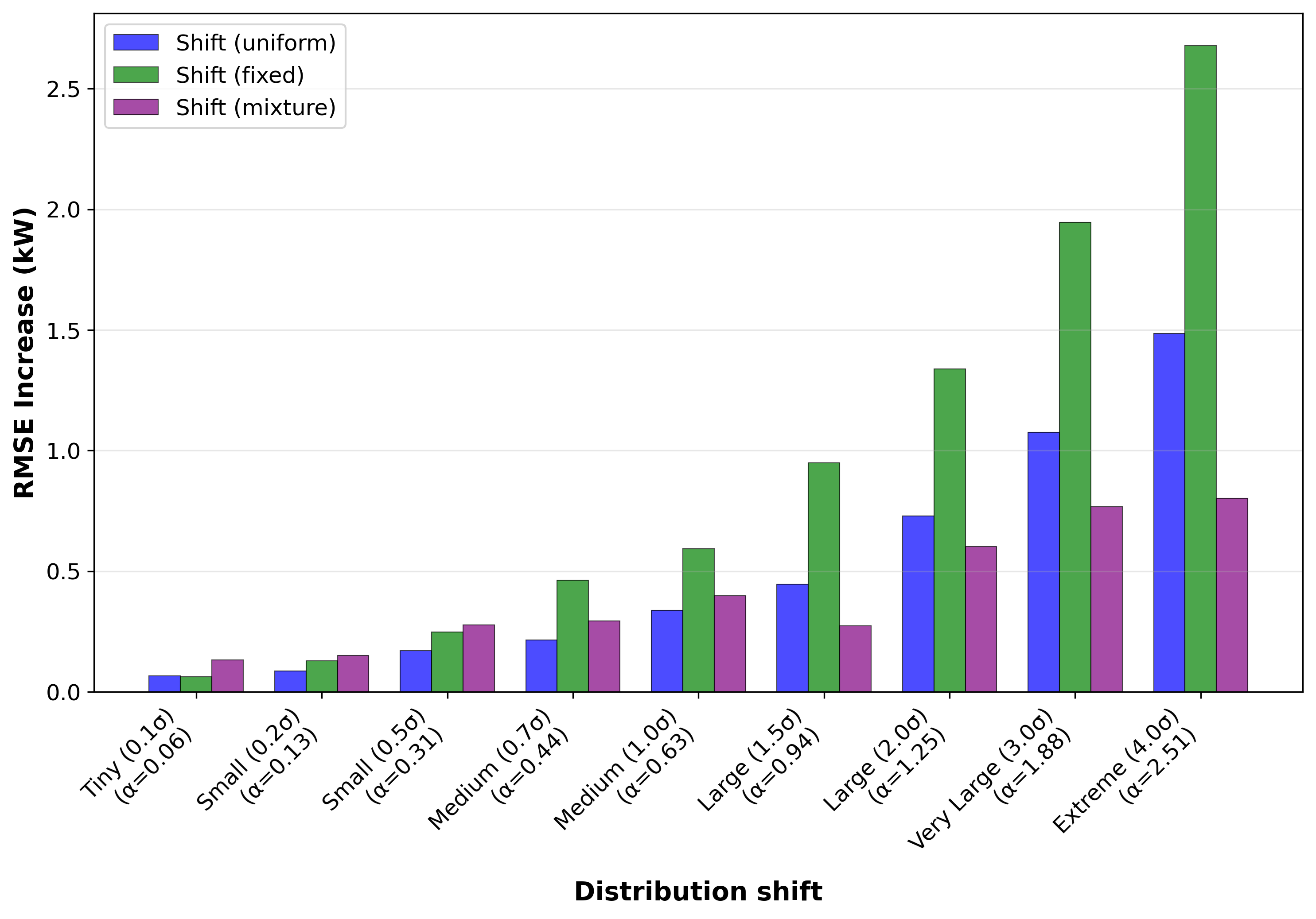}%
    \label{RMSE increase}}
  \caption{Generalization analysis under different domain-shift scenarios: (a) empirical bound versus theoretical upper bound; (b) RMSE increase.}
  \label{fig:rmse-increase}
\end{figure*}

\subsection{Analysis of Generalization Bound }

To validate the proposed generalization bound for the domain shift in \textbf{Corollary~1}, we conduct  experiments under three shift types: uniform, fixed, and mixture domain shifts. Fig.~\ref{Emperical bound vs theoretical upper bound} compares the theoretical upper bound (dotted) with the empirical change in the expected loss (solid) versus domain shift magnitude $\alpha$.
Across all shifts, the bound stays above the empirical curves (never violated) and it is the tightest for the fixed shift and more conservative for uniform/mixture domain shifts.
Fig.~\ref{RMSE increase} reports RMSE increase (kW) across nine scenarios as $\alpha$ grows from $0.08$ (Tiny) to $2.55$ (Extreme). RMSE increase demonstrates the practical impact of domain shift on forecasting accuracy, showing that the prediction errors can increase from negligible levels (0.05 kW) under small shift to substantial levels (2.7 kW) under extreme shifts.

\begin{figure*}[!ht]
\centering
\subfloat[Bounds comparison]{%
\includegraphics[width=0.48\textwidth]{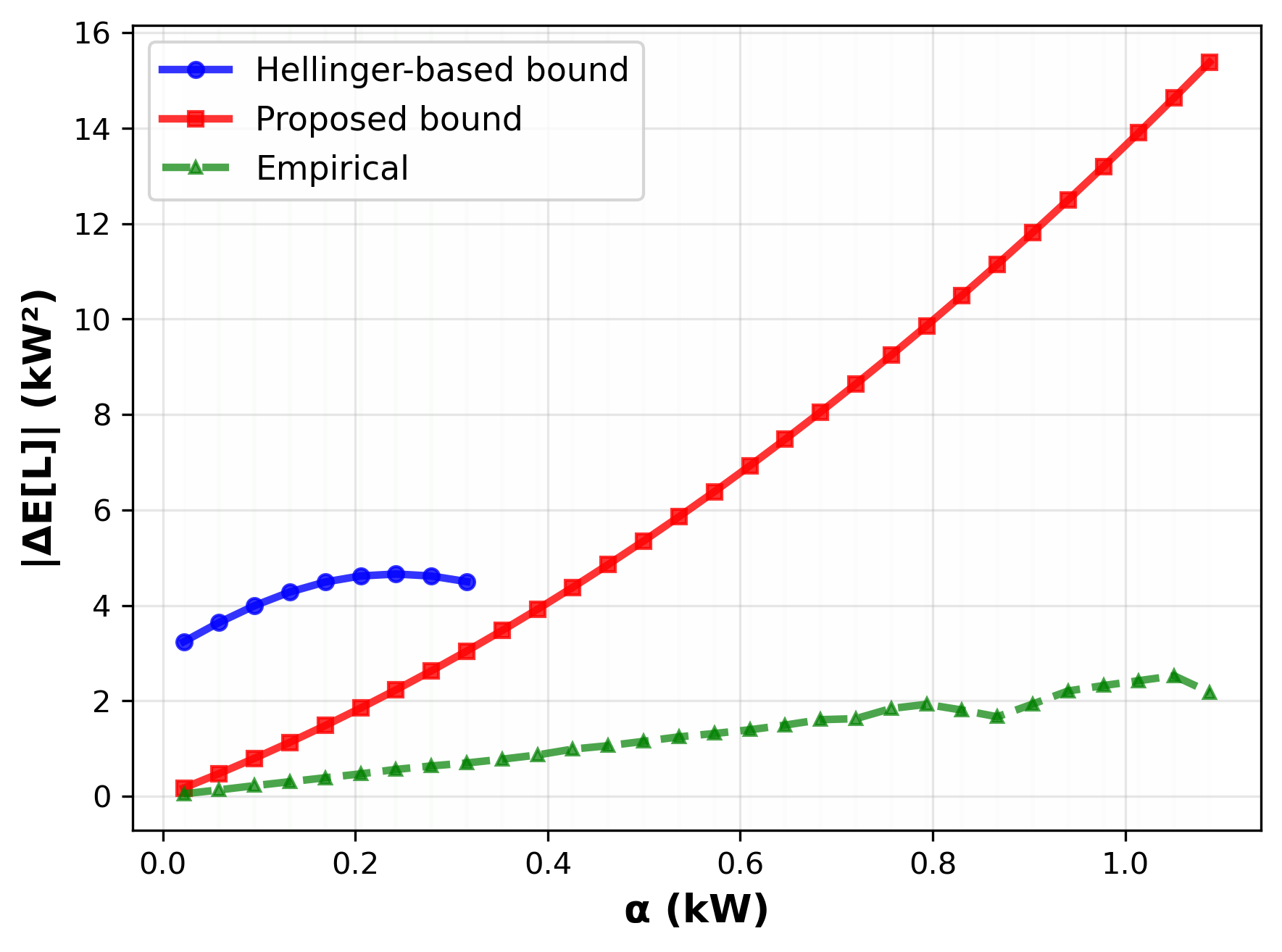}\label{Comparison}}
\hfil \subfloat[Validity regions]{%
\includegraphics[width=0.48\textwidth]{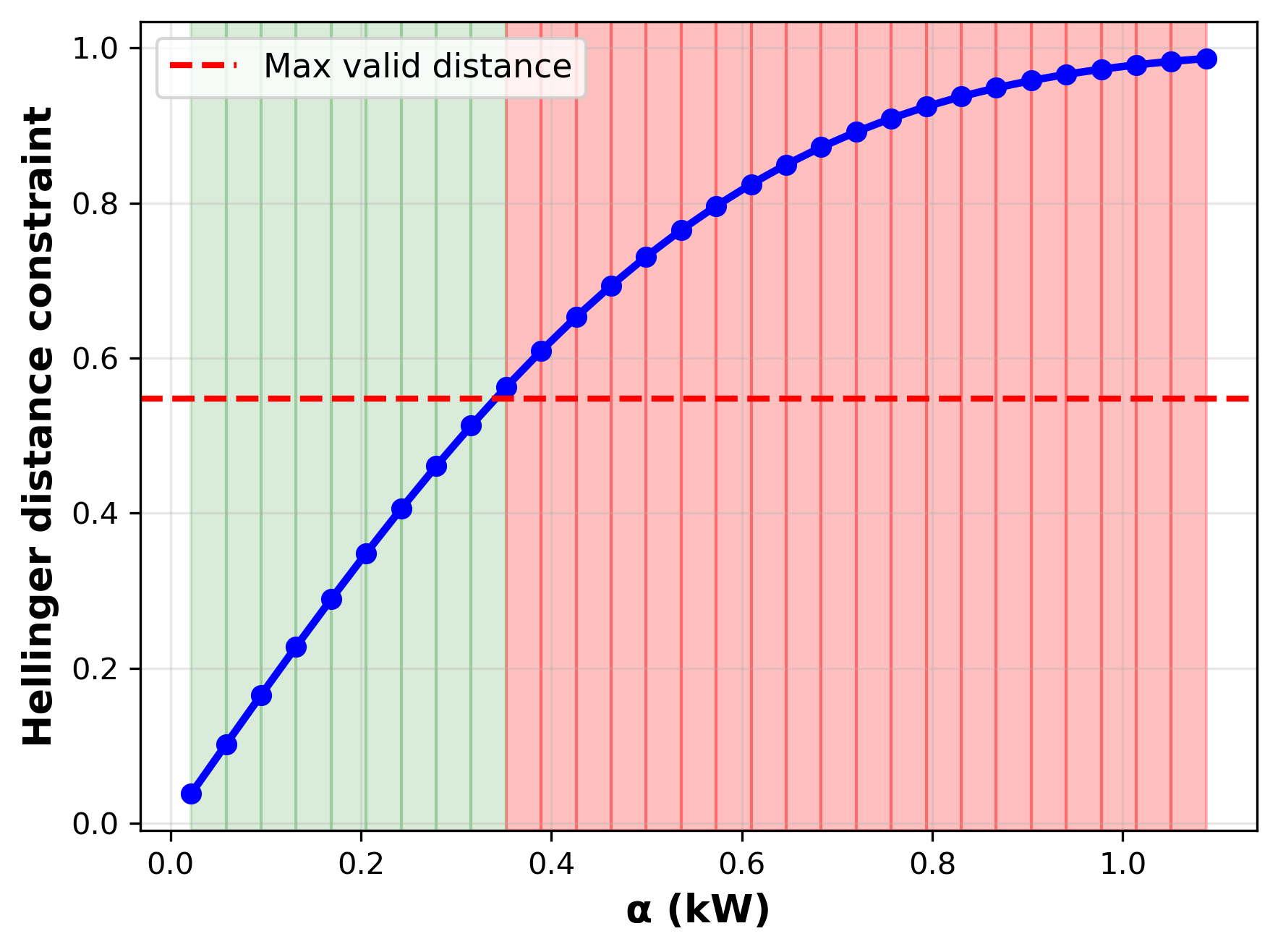}\label{valid_region}}
\caption{(a) Comparison with Hellinger distance-based bound across different domain shift scenarios; (b) validity regions for the Hellinger distance-based bound)}
\label{fig:rmse-and-class-wide_}
\end{figure*}

\vspace{0.2cm}
\noindent \textbf{Comparison with Hellinger distance-based bound.} We conduct a fair comparison with a baseline bound based on the Hellinger distance~\cite{weber2022certifying}. 
As demonstrated in~\cite{weber2022certifying}, this bound can be substantially tighter than the earlier Wasserstein-distance-based certificates, e.g.,~\cite{cranko2021generalised}. 
For this reason, we compare our approach with the Hellinger distance-based bound. This baseline method assumes that the data distribution is known and uses the Hellinger distance between the original distribution $\mathcal{D}$ and the shifted distribution $\mathcal{D}'$. To enable this comparison, we first approximate the OpenEI dataset distribution using a Gaussian Mixture Model. To compare with the Hellinger distance-based bound, we generate two types of covariate shifts: a fixed disturbance vector $\mathbf{d}$ and Gaussian shift (are drawn independently per timestep).
For each shift magnitude $\alpha$, we compute the Hellinger distance and evaluate both bounds.
As shown in Fig.~\ref{Comparison}, our bound is significantly tighter than the Hellinger-based bound within its valid region.
A critical limitation of the Hellinger approach is its strict validity constraint: the bound can only be applied when the Hellinger distance satisfies the strict validity constraint which, restricts its applicability to small domain shifts. Beyond a problem-specific threshold (red line), the Hellinger-based bound diverges to infinity and becomes unusable, providing no guarantee under moderate-to-large shifts (see Fig.~\ref{valid_region}).
In our electricity-forecasting setting, this threshold occurs at roughly $0.35$~kW, so the competing method is usable only for small shifts. By contrast, our bound remains valid and finite across the entire range of shift magnitudes that are tested.
\begin{figure}[!ht]
    \centering
    \includegraphics[width=\columnwidth]{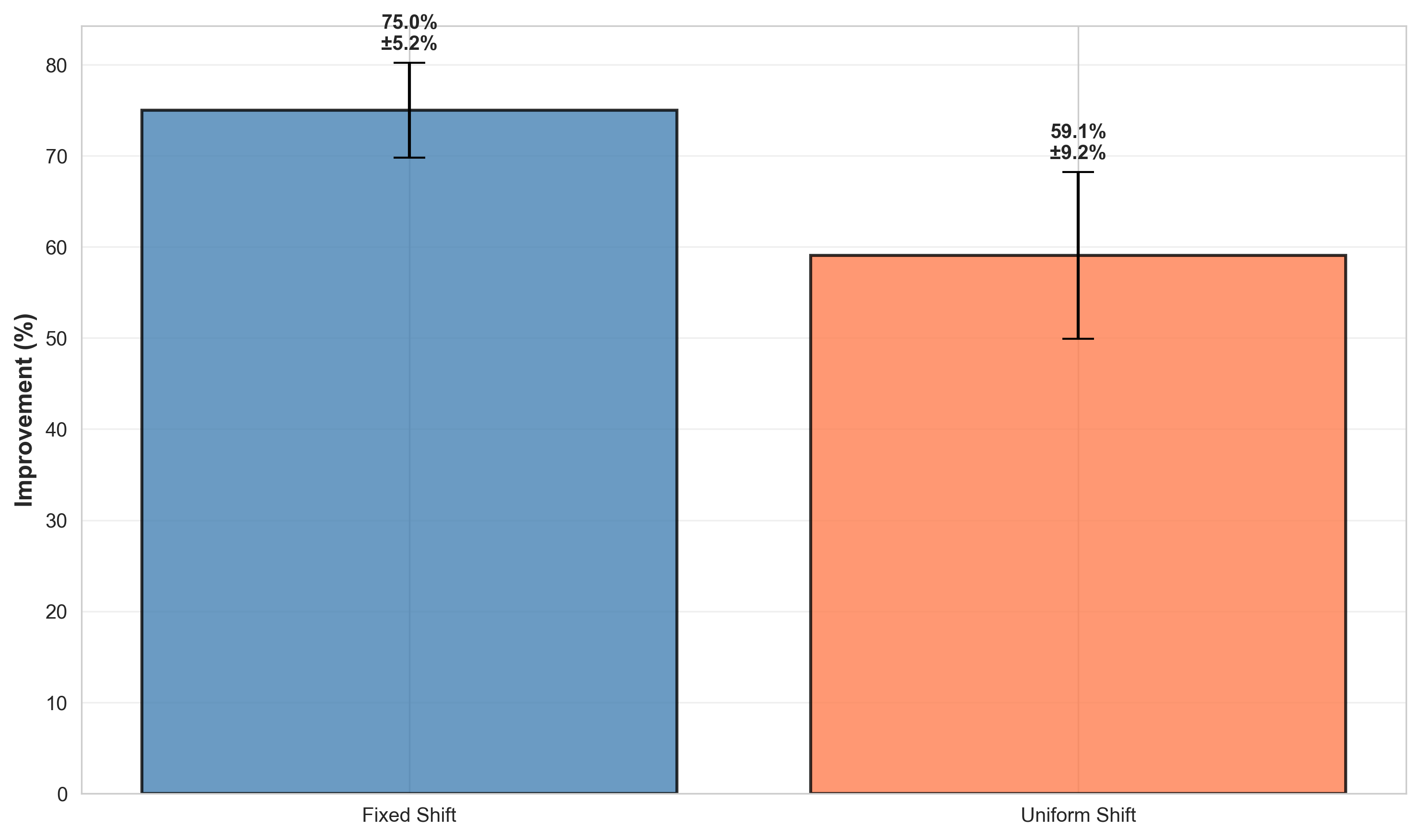}
  \caption{DG method improvement comparison: Fixed vs. uniform domain shift}
  \label{Improvement}
\end{figure}
\vspace{-10pt}

\subsection{Performance of the Domain Generalization Method}
We evaluate the performance of the domain generalization method proposed in Section~\ref{OOD_method}. We generate domain shift scenarios under \textbf{Assumption 9}. Specifically, we generate domain shift on the OpenEI dataset under two scenarios: fixed and uniform domain shift, and conduct 10 independent trials for each shift type to ensure the statistical robustness.

\noindent \textbf{Controllability-based model reduction.}
The interpretable model of the trained LSTM has a state matrix $\mathbf{A}\in\mathbb{R}^{512\times512}$, but not all states are controllable via the input feedback. In practice, these weakly controllable states often have limited influence on the output and mainly increase the computational cost of solving optimization problem~\eqref{eq:lmi-Hi}. To address this, we apply a controllability-Gramian-based decomposition to the controllable subspace. The details of this approach are provided in \textbf{Appendix~V}.
Fig.~\ref{Improvement} shows the improvement across trials, confirming a robust performance. These results validate our DG method, yielding average MSE reductions of 75\% (fixed shift) and 59\% (uniform shift), and significantly outperforming the baseline without domain generalization. Additional simulation results are provided in \textbf{Appendix VI}.
\section{Conclusion}
We have developed an interpretable representation of RNNs, with a focus on LSTMs. We model the learned LSTM dynamics as an unknown discrete-time nonlinear feedback system and used the Koopman operator theory with DMDc to obtain a data-driven linear approximation. By modeling the domain shifts as additive input disturbances, we have employed the $\text{H}_{\infty}$ norm to quantify the worst-case state deviations and to characterize the generalization behavior under both covariate shift and concept drift. Building on this analysis, we have proposed a domain generalization approach that improves the robustness to input distribution shifts by augmenting the trained LSTM with a domain-shift rejection layer, designed via an LMI-constrained optimization. Future work will extend the proposed framework to broader classes of DNN architectures.We will also investigate a domain-shift detection stage and explore how to couple it with the proposed DG procedure to move toward a complete pipeline. 

\section*{Appendices}
\subsection{Numerical Procedure for Computing $\mathbf{A}$ and $\mathbf{B}$ with the Identity Observable}

These matrices $\mathbf{A}$ and $\mathbf{B}$ are given in Section III C.

Given the data matrices
\(
\mathbf{S}=\big[\mathbf{s}_{0},\ldots,\mathbf{s}_{\bar{T}-1}\big],\quad
\mathbf{\Xi}=\big[\mathbf{s}_{1},\ldots,\mathbf{s}_{\bar{T}}\big],\quad
\mathbf{X}=\big[\mathbf{x}_{1},\ldots,\mathbf{x}_{\bar{T}}\big],
\) form the stacked snapshot matrix
\begin{equation*}
\mathbf{\Omega} \;:=\;
\begin{bmatrix}
\mathbf{S}\\[2pt]\mathbf{X}
\end{bmatrix}
\in \mathbb{R}^{(n_{\text{state}}+n_{\text{in}})\times \bar{T}} .
\label{eq:Omega_stack}    
\end{equation*}
The numerical procedure of $\mathbf{A}$ and $\mathbf{B}$ is:

\noindent
i.  {\em Low-rank bases}:
Compute the truncated SVDs
\begin{equation*}
\mathbf{\Omega} \approx \tilde{\mathbf{U}}\,\tilde{\mathbf{\Sigma}}\,\tilde{\mathbf{V}}^{\!*},\qquad
\mathbf{\Xi} \approx \hat{\mathbf{U}}\,\hat{\mathbf{\Sigma}}\,\hat{\mathbf{V}}^{\!*},
\label{eq:twoSVDs}
\end{equation*}
with truncation ranks \(p\) and \(r\), respectively. Partition \(\tilde{\mathbf{U}}\) compatibly with $\mathbf{\Omega}$:
\[
\tilde{\mathbf{U}}=
\begin{bmatrix}
\tilde{\mathbf{U}}_{1}\\ \tilde{\mathbf{U}}_{2}
\end{bmatrix},\quad
\tilde{\mathbf{U}}_{1}\in\mathbb{R}^{n_{\text{state}}\times p},\;\;
\tilde{\mathbf{U}}_{2}\in\mathbb{R}^{n_{\text{in}}\times p}.
\]

\noindent
ii. {\em Projected (reduced) operators}:
Projecting onto the output basis \(\hat{\mathbf{U}}\) yields reduced-order operators
\begin{equation*}
\tilde{\mathbf{A}} \;=\; \hat{\mathbf{U}}^{\!*}\,\mathbf{\Xi}\,\tilde{\mathbf{V}}\,
\tilde{\mathbf{\Sigma}}^{-1}\,\tilde{\mathbf{U}}_{1}^{\!*}\,\hat{\mathbf{U}},
\qquad
\tilde{\mathbf{B}} \;=\; \hat{\mathbf{U}}^{\!*}\,\mathbf{\Xi}\,\tilde{\mathbf{V}}\,
\tilde{\mathbf{\Sigma}}^{-1}\,\tilde{\mathbf{U}}_{2}^{\!*}.
\label{eq:AB_tilde}
\end{equation*}
In the reduced coordinates \(\mathbf{z}_{t}:=\hat{\mathbf{U}}^{\!*}\mathbf{s}_{t}\in\mathbb{R}^{r}\),
\begin{equation*}
\mathbf{z}_{t} \;\approx\; \tilde{\mathbf{A}}\,\mathbf{z}_{t-1} + \tilde{\mathbf{B}}\,\mathbf{x}_{t},
\qquad
\mathbf{s}_{t} \;\approx\; \hat{\mathbf{U}}\,\mathbf{z}_{t}.
\label{eq:reduced_dmdc}
\end{equation*}

\noindent
iii. {\em Spectral form and dynamic modes}:
Compute the eigendecomposition
\begin{equation*}
\tilde{\mathbf{A}}\,\mathbf{W} = \mathbf{W}\,\mathbf{\Lambda},
\label{eq:eigAtilde}
\end{equation*}
and define the DMDc modes (in the full state space)
\begin{equation*}
\mathbf{\Phi} \;=\; \mathbf{\Xi}\,\tilde{\mathbf{V}}\,\tilde{\mathbf{\Sigma}}^{-1}\,
\tilde{\mathbf{U}}_{1}^{\!*}\,\hat{\mathbf{U}}\,\mathbf{W}.
\label{eq:modes}
\end{equation*}
In modal coordinates \(\mathbf{y}_{t}:=\mathbf{W}^{-1}\mathbf{z}_{t}\),
\begin{equation*}
\mathbf{y}_{t} \;\approx\; \mathbf{\Lambda}\,\mathbf{y}_{t-1}
+ \underbrace{\mathbf{W}^{-1}\tilde{\mathbf{B}}}_{\tilde{\mathbf{B}}_{\text{modal}}}\,\mathbf{x}_{t},
\qquad
\mathbf{s}_{t} \;\approx\; \hat{\mathbf{U}}\,\mathbf{W}\,\mathbf{y}_{t}.
\label{eq:modal_dmdc}
\end{equation*}

\noindent
iv. {\em Back to full-state operators}:
A full-state pair can be recovered via
\begin{equation}
\mathbf{A}_{\text{proj}} := \hat{\mathbf{U}}\,\tilde{\mathbf{A}}\,\hat{\mathbf{U}}^{\!*},\qquad
\mathbf{B}_{\text{proj}} := \hat{\mathbf{U}}\,\tilde{\mathbf{B}}.
\label{eq:project_back}
\end{equation}
When no truncation is applied (i.e., \(r=n_{\text{state}}\) and \(p=n_{\text{state}}+n_{\text{in}}\)) and the SVDs are exact, $\mathbf{A}_{\text{proj}}$ and $\mathbf{B}_{\text{proj}}$ coincide with the standard least-squares solution:
\[
\big[\,\mathbf{A}\;\;\mathbf{B}\,\big]
\;=\;
\mathbf{\Xi}\,
\begin{bmatrix}
\mathbf{S}\\ \mathbf{X}
\end{bmatrix}^{\!\dagger},
\]
which is the identity-observable instance of the general estimator in (9). 
Note that the truncation ranks \(p\) and \(r\) control model order (choose via cumulative energy of singular values or cross-validation). 

\subsection{Proof of Theorem~1}

\begin{proof}
Since $\mathbf T^{dh}(z)$ is stable, for any disturbance sequence $\{\mathbf d_t\}_{t=0}^T$:
\begin{equation*}
\|\Delta \mathbf h^N\|_{\ell_2}
\;\le\; \|\mathbf T^{dh}_z\|_{ H_\infty}\,\|\mathbf d\|_{\ell_2}
\;\le\; \|\mathbf T^{dh}_z\|_{ H_\infty}\,\alpha.
\end{equation*}
Squaring both sides yields
\begin{equation*}
\sum_{t=0}^T \|\Delta \mathbf h^N_t\|_2^2
=\|\Delta \mathbf h^N\|_{\ell_2}^2
\;\le\; \|\mathbf T^{dh}_z\|_{ H_\infty}^{\,2}\,\alpha^2,
\end{equation*}
which proves (35).
Finally, as each summand is nonnegative,
\begin{align*}
\max_{0\le t\le T}\big\|\hat{\mathbf h}^{\,N}_t - \mathbf h^{\,N}_t\big\|_2\;&\le\; \Big(\sum_{t=0}^{T}\|\hat{\mathbf h}^{\,N}_t-\mathbf h^{\,N}_t\|_2^2\Big)^{\!1/2}\\
\;&\le\; \|\mathbf T^{dh}_z\|_{H_\infty}\,\alpha,
\end{align*}
which proves (36).
\end{proof}

\subsection{Proof of Corollary 1}

\begin{proof}
According to \textbf{Theorem 1}, for any domain shift $\{\mathbf d_t\}_{t=0}^T$ such that \( \|{\mathbf{d}}\|_{\ell_2} \leq \alpha \), we have
\[\max \|\hat{{\mathbf{h}}}^N_t-{\mathbf{h}}^N_t\|_2 \leq \|{\mathbf{T}}_z^{dh}\|_{H_\infty} \cdot \alpha, \quad \forall t \in \{0,1,...,T\}.\]
In addition, based on \textbf{Assumption 8}, we can write
\begin{align*} 
|\mathcal{L}(\hat{\mathbf{y}}_t, g(\hat{{\mathbf{h}}}^N_{t-\tau:t}) &- \mathcal{L}(\mathbf{y}_t, g({\mathbf{h}}^N_{t-\tau:t}))| \leq L \cdot (\|\hat{\mathbf{y}}_t - \mathbf{y}_t\|_2\\&+ 
\|g(\hat{{\mathbf{h}}}^N_{t-\tau:t}) - g({\mathbf{h}}^N_{t-\tau:t}) \|_2).
\end{align*}
Then, following  \textbf{Assumption 7}, we have
\begin{align*}
|\mathcal{L}(\hat{\mathbf{y}}_t, g(\hat{{\mathbf{h}}}^N_{t-\tau:t}) &- \mathcal{L}(\mathbf{y}_t, g({\mathbf{h}}^N_{t-\tau:t}))| \leq  L \cdot (\beta \\ &+ 
G \cdot \sqrt{\tau+1} \cdot \|{\mathbf{T}}_z^{dh}\|_{H_\infty} \cdot \alpha).
\end{align*}
Therefore, we can derive
\begin{align*}
|\mathcal{L}(\hat{\mathbf{y}}_t, \hat{{\mathbf{q}}}_t) &- \mathcal{L}(\mathbf{y}_t, {\mathbf{q}}_t)| \leq  L \cdot (\beta\\&+
G \cdot\sqrt{\tau+1} \cdot  \|{\mathbf{T}}_z^{dh}\|_{H_\infty} \cdot \alpha), \; \forall t \in \{0,1,...,T\}.
\end{align*}
Then, taking the expectation gives
\begin{align*}
\mathbb{E}\left[\left|\mathcal{L}(\hat{\mathbf{y}}_t, \hat{{\mathbf{q}}}_t) - \mathcal{L}(\mathbf{y}_t, {\mathbf{q}}_t) \right|\right] &\leq  L \cdot (\beta \\
&+G \cdot \sqrt{\tau+1} \cdot\|{\mathbf{T}}_z^{dh}\|_{H_\infty} \cdot \alpha).
\end{align*}
Using Jensen's inequality and the convexity of the norm
\begin{align*}
\left| \mathbb{E}\left[\mathcal{L}(\mathbf{y}_t, \hat{{\mathbf{q}}}_t)\right] - \mathbb{E}\left[\mathcal{L} (\mathbf{y}_t, {\mathbf{q}}_t)\right] \right|&\leq  L \cdot (\beta \\&+G \cdot \sqrt{\tau+1} \cdot\|{\mathbf{T}}_z^{dh}\|_{H_\infty} \cdot \alpha).
\end{align*}\end{proof}
\section{Extending Corollary~1 to the MSE Loss}

\begin{proof}
Assume $\mathbf{y}_t$ and $\mathbf{q}_t$ are real-valued and
$\mathcal{L}(\mathbf y_t,\mathbf q_t)=\|\mathbf y_t-\mathbf q_t\|_2^2$.
Define $\mathbf e_t := \mathbf y_t-\mathbf q_t$,
$\Delta \mathbf y_t := \hat{\mathbf y}_t-\mathbf y_t$, and
$\Delta \mathbf q_t := \hat{\mathbf q}_t-\mathbf q_t$.
Then $\hat{\mathbf e}_t=\hat{\mathbf y}_t-\hat{\mathbf q}_t=\mathbf e_t+(\Delta \mathbf y_t-\Delta \mathbf q_t)$, and
\begin{align*}
\mathcal{L}(\hat{\mathbf y}_t,\hat{\mathbf q}_t)-\mathcal{L}(\mathbf y_t,\mathbf q_t)
&= \|\hat{\mathbf e}_t\|_2^2-\|\mathbf e_t\|_2^2 \\
&= (\hat{\mathbf e}_t-\mathbf e_t)^\top(\hat{\mathbf e}_t+\mathbf e_t) \\
&= (\Delta \mathbf y_t-\Delta \mathbf q_t)^\top\!\left(2\mathbf e_t+(\Delta \mathbf y_t-\Delta \mathbf q_t)\right).
\end{align*}
By Cauchy--Schwarz inequality,
\begin{align*}
\big|\mathcal{L}(\hat{\mathbf y}_t,\hat{\mathbf q}_t)-\mathcal{L}(\mathbf y_t,\mathbf q_t)\big|
&\le 2\|\mathbf e_t\|_2\,\|\Delta \mathbf y_t-\Delta \mathbf q_t\|_2\\
&+\|\Delta \mathbf y_t-\Delta \mathbf q_t\|_2^2.
\end{align*}
Using the triangle inequality,
\[\|\Delta \mathbf y_t-\Delta \mathbf q_t\|_2\le \|\Delta \mathbf y_t\|_2+\|\Delta \mathbf q_t\|_2.\]
If $\|\Delta \mathbf y_t\|_2\le \beta$ and
$\|\Delta \mathbf q_t\|_2\le G\sqrt{\tau+1}\,\|{\mathbf{T}}_z^{dh}\|_{{H}_\infty}\alpha$, then
\begin{align*}
\big|\mathcal{L}(\hat{\mathbf y}_t,\hat{\mathbf q}_t)-\mathcal{L}(\mathbf y_t,\mathbf q_t)\big|
&\le 2\|\mathbf e_t\|_2\Big(\beta+G\sqrt{\tau+1}\,\|{\mathbf{T}}_z^{dh}\|_{{H}_\infty}\alpha\Big)\\
&+\Big(\beta+G\sqrt{\tau+1}\,\| {\mathbf{T}}_z^{dh}\|_{{H}_\infty}\alpha\Big)^2.
\end{align*}
Using $\|\mathbf e_t\|_2=\sqrt{\mathcal{L}(\mathbf y_t,\mathbf q_t)}$ and taking expectation on both sides yields
\begin{align*}
\mathbb{E}\big|\mathcal{L}(\hat{\mathbf y}_t,\hat{\mathbf q}_t)-\mathcal{L}(\mathbf y_t,\mathbf q_t)\big|
&\le 2\,\mathbb{E}\sqrt{\mathcal{L}(\mathbf y_t,\mathbf q_t)}
\Big(\beta\\&+G\sqrt{\tau+1}\,\|{\mathbf{T}}_z^{dh}\|_{{H}_\infty}\alpha\Big)\\
&\quad+\Big(\beta+G\sqrt{\tau+1}\,\|{\mathbf{T}}_z^{dh}\|_{{H}_\infty}\alpha\Big)^2.
\end{align*}
Moreover,
\begin{align*}
\big|\mathbb{E}\mathcal{L}(\hat{\mathbf y}_t,\hat{\mathbf q}_t)-\mathbb{E}\mathcal{L}(\mathbf y_t,\mathbf q_t)\big|
&=\big|\mathbb{E}\big[\mathcal{L}(\hat{\mathbf y}_t,\hat{\mathbf q}_t)-\mathcal{L}(\mathbf y_t,\mathbf q_t)\big]\big|\\
&\le \mathbb{E}\big|\mathcal{L}(\hat{\mathbf y}_t,\hat{\mathbf q}_t)-\mathcal{L}(\mathbf y_t,\mathbf q_t)\big|.
\end{align*}
Finally,
\begin{align*}
    \big|\mathbb E\,\mathcal{L}(\hat{\mathbf y}_t,\hat{\mathbf q}_t)-\mathbb E\,\mathcal{L}(\mathbf y_t,\mathbf q_t)\big|&\le 2\,\mathbb E\,\sqrt{\mathcal{L}(\mathbf y_t,\mathbf q_t)}\,(\beta\\&+G \cdot \sqrt{\tau+1} \cdot\|{\mathbf{T}}_z^{dh}\|_{H_\infty} \cdot \alpha)\\&+(\beta+G \cdot \sqrt{\tau+1} \cdot\|{\mathbf{T}}_z^{dh}\|_{H_\infty} \cdot \alpha)^2.
\end{align*}
\end{proof}

\subsection{Controllability-Based Model Reduction}

Our approach for controllability-based model reduction is as follows:
\begin{enumerate}
    \item The controllability Gramian $\mathbf P_c$ satisfies the discrete Lyapunov equation:
    \begin{equation}
    \mathbf P_c = \mathbf A \mathbf P_c \mathbf A^\top + \mathbf B \mathbf B^\top.
    \end{equation}
    
    \item Through eigenvalue decomposition $\mathbf P_c = \mathbf U \mathbf \Sigma \mathbf U^\top$ and selecting eigenvalues above a threshold, we identify $n_c = 50$ truly controllable modes out of 512 total states.
    
    \item The transformation matrix $\mathbf T_c \in \mathbb{R}^{512 \times n_c}$ (formed from the top $n_c$ eigenvectors) projects the full state space onto the controllable subspace.
    
    \item The disturbance input matrix exhibits strong localization: $\|\mathbf B_c\| \gg \|\mathbf B_{uc}\|$, where $\mathbf B_c$ corresponds to controllable states and $\mathbf B_{uc}$ to uncontrollable states. This verifies that domain shifts primarily affect the controllable subspace.
\end{enumerate}
Using this decomposition, we reduce the interpretable model to its controllable subspace and compute the feedback gain for the controllable subsystem, denoted by $\mathbf{V}_c$:
\begin{align*}
    \hat{\mathbf {s}}_{c,t} &= \mathbf A_{cc} \hat{\mathbf s}_{c,t-1} + \mathbf B_c \mathbf x_t + \mathbf B_c \mathbf d_t, \nonumber \\
    \hat{\mathbf h}^N_t &= \mathbf C_c \hat{\mathbf s}_{c,t},
\end{align*}
where $\mathbf A_{cc} \in \mathbb{R}^{n_c \times n_c}$, $\mathbf B_c \in \mathbb{R}^{n_c \times 1}$, $\mathbf C_c \in \mathbb{R}^{1 \times n_c}$, and $\mathbf d_t$ represents the domain shift disturbance. We then calculate the optimal state-feedback gain $\mathbf{V}_c \in \mathbb{R}^{1 \times n_c}$ using CVXPY with MOSEK/SCS solvers. For implementation, we project the full LSTM state to the controllable subspace via $\mathbf s_{c,t} = \mathbf T_c^\top \mathbf s_t$ and apply the control law:
\[\mathbf f_t = \mathbf{V}_c \mathbf x_{c,t}.\]

\subsection{Additional Simulation Results }

\paragraph{Analysis of Generalization Bound}

Fig.~\ref{classifies} classifies whether each domain shift scenario results in a covariate shift (COV, shown in green) or concept shift (CON, shown in red). This classification is crucial because it demonstrates the broad applicability of our proposed bound.
\begin{figure}[!h]
  \centering
  \includegraphics[width=0.95\linewidth]{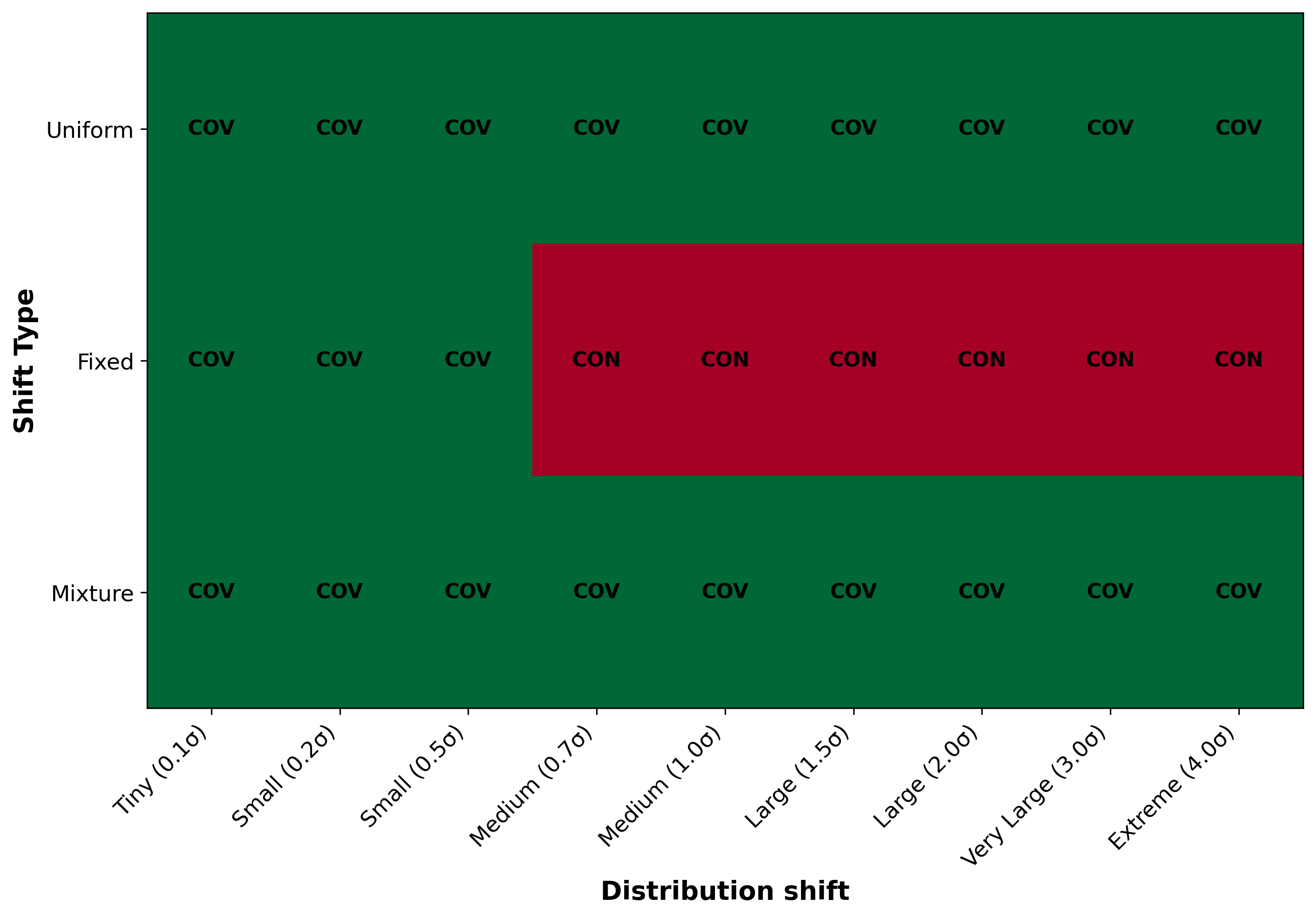}
  \caption{Domain shift classification (covariate shift vs.\ concept shift).}
  \label{classifies}
\end{figure}
\paragraph{Performance of the Domain Generalization Method}

Fig. ~\ref{fig:evolution} illustrates the MSE reduction achieved by the proposed DG method across trials under both fixed and uniform domain shifts.
\begin{figure*}[!t]
  \centering
  \subfloat[MSE evolution—fixed shift]{%
    \includegraphics[width=0.48\textwidth]{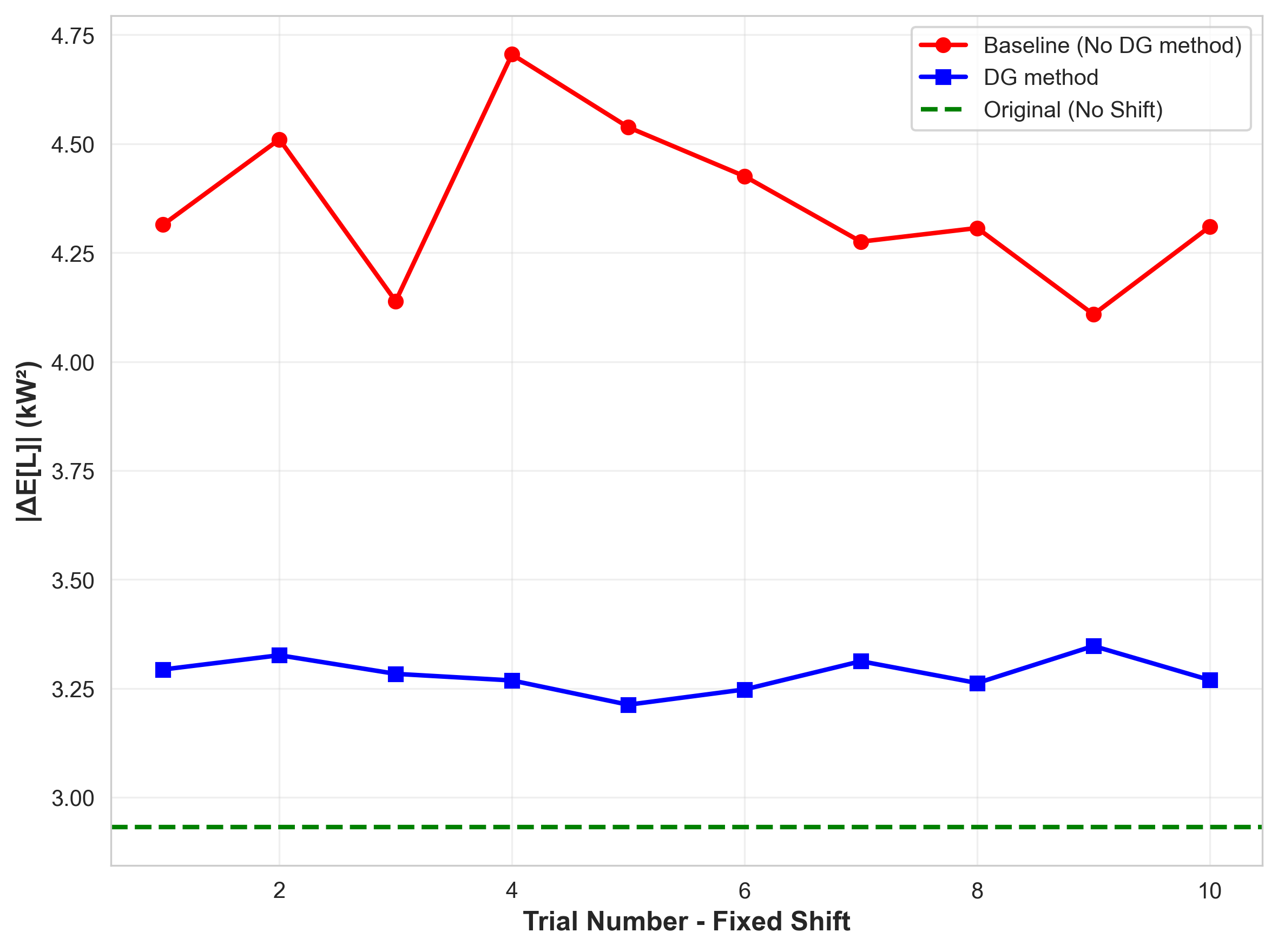}%
    \label{fig:evolution-fixed}}
  \hfil
  \subfloat[MSE evolution—uniform shift]{%
    \includegraphics[width=0.48\textwidth]{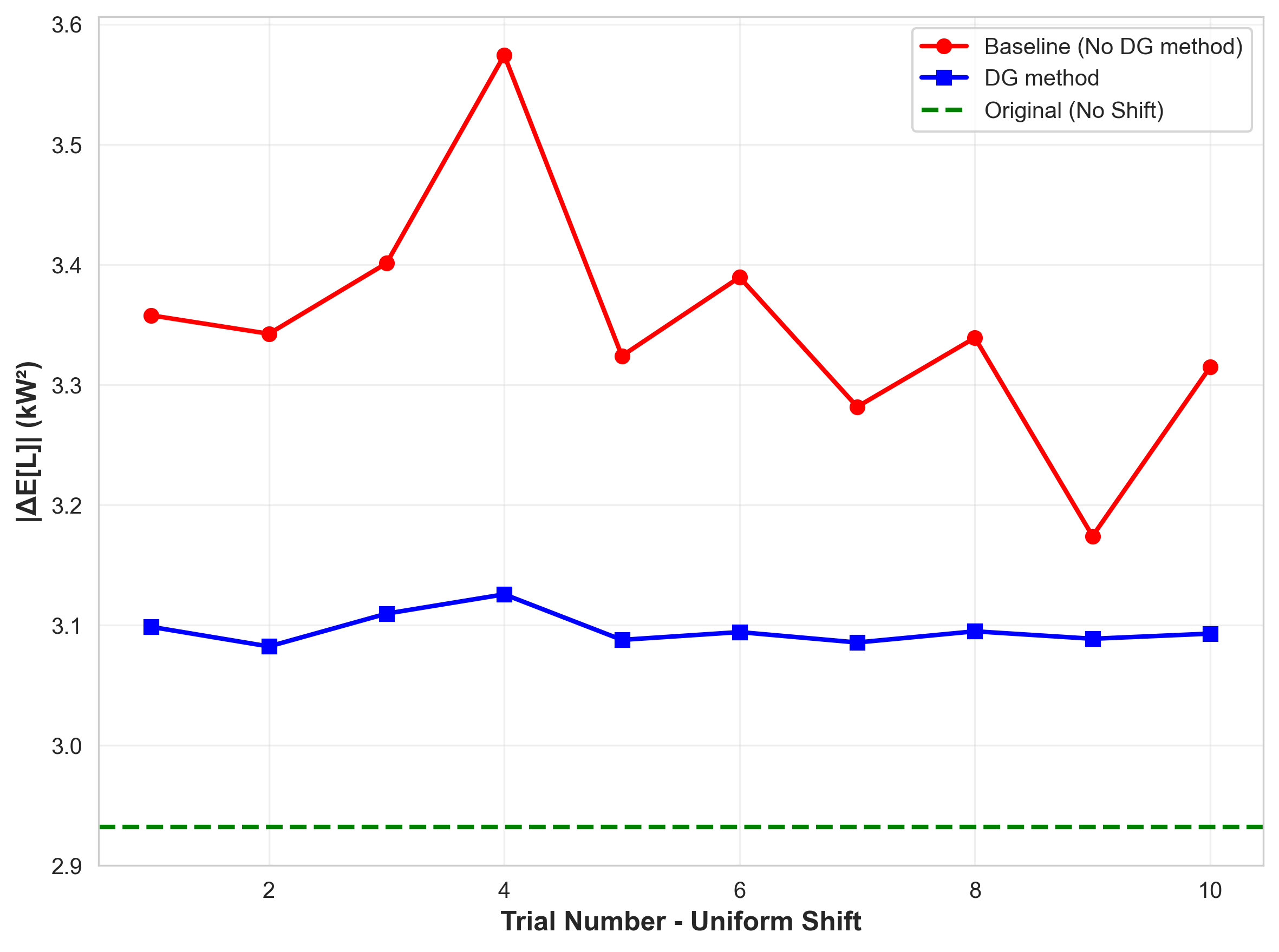}%
    \label{fig:evolution-uniform}}
  \caption{MSE evolution (kW$^2$) across trials under (a) fixed shift, and (b) uniform shift}
  \label{fig:evolution}
\end{figure*}

\bibliographystyle{IEEEtran}
\bibliography{ref}
\end{document}